\documentclass[preprint,12pt]{elsarticle}
\usepackage{hyperref}
\usepackage{times}
\usepackage{amscd,amsmath,amssymb,amsthm}
\usepackage{color} 

\usepackage{mathrsfs}
\usepackage{graphicx}

\usepackage{helvet}
\usepackage{courier}

\usepackage{algorithm}
\usepackage[noend]{algorithmic}
\usepackage{verbatim}
\usepackage{flafter}

\newcommand{\lev}{\textsf{lev}}
\newcommand{\atom}{\textsc{Atom}}
\newcommand{\face}{\textsc{Face}}
\newcommand{\hole}{\textsc{Hole}}

\newcommand{\dc}{\textsc{DC}}
\newcommand{\ec}{\textsc{EC}}
\newcommand{\po}{\textsc{PO}}
\newcommand{\tpp}{\textsc{TPP}}
\newcommand{\ntpp}{\textsc{NTPP}}
\newcommand{\eq}{\textsc{EQ}}
\newcommand{\tppi}{\textsc{TPPi}}
\newcommand{\ntppi}{\textsc{NTPPi}}

\theoremstyle{definition}
\newtheorem{definition}{Definition}
\theoremstyle{plain}
\newtheorem{proposition}{Proposition}
\newtheorem*{prop-other}{Proposition}
\newtheorem{theorem}{Theorem}
\newtheorem*{thm-other}{Theorem}
\newtheorem{example}{Example}
\newtheorem{corollary}{Corollary}
\newtheorem{lemma}{Lemma}

\theoremstyle{remark}
\newtheorem{remark}{Remark}

\begin{document}

\journal{arXiv}

\begin{frontmatter}

\title{On the Internal Topological Structure of Plane Regions\tnoteref{aaai}} \tnotetext[aaai]{An abstract of this paper appeared  in \emph{Proceedings of the 12th International Conference on  the Principles of Knowledge Representation and Reasoning (KR-10)}, pages 581-583, Toronto, Canada, May 9-13, 2010.}

\author[1]{Sanjiang Li\corref{cor1}}
\ead{sanjiang.li@uts.edu.au}

\address[1]{Centre for Quantum Computation and Intelligent Systems,
       Faculty of Engineering and Information Technology, University of Technology
       Sydney, Australia}

\cortext[cor1]{Corresponding Author}
\begin{abstract}
The study of topological information of spatial objects has for a long time been a focus of research in disciplines like computational geometry, spatial reasoning, cognitive science, and robotics. While the majority of these researches emphasised the topological relations between spatial objects, this work studies the internal topological structure of bounded plane regions, which could consist of multiple pieces and/or have holes and islands to any finite level. The insufficiency of simple regions (regions homeomorphic to closed disks) to cope with the variety and complexity of spatial entities and phenomena has been widely acknowledged. Another significant drawback of simple regions is that they are not closed under set operations union, intersection, and difference.
This paper considers bounded semi-algebraic regions, which are closed under set operations and can closely approximate most plane regions arising in practice. 

For each bounded semi-algebraic region $A$, we associate a unique set of \emph{faces} and a unique set of \emph{holes}, and show that each face/hole of $A$ is a \emph{simple region with holes} and the union of all faces and holes (called the \emph{envelope} of $A$) is a \emph{composite region}. We further define the \emph{atoms} of $A$ as those simple regions involved in the envelope, the faces, and the holes of $A$, and prove that these atoms are necessary and sufficient to determine the (global) nine-intersection (topological) relations between $A$ and other regions. The internal topological structure of $A$ is then represented in a layered graph model (called the \emph{link graph} of $A$), where the nodes represent the connected components of the interior and exterior of $A$, and two nodes are linked if their boundaries share an arc. Compared with the tree model of Worboys and Bofakos, the link graph is more precise and can answer queries such as ``\emph{How many faces does $A$ (or a hole of $A$) have?}". Moreover, the tree model of $A$ can be efficiently computed from the the link graph of $A$. 

\end{abstract}

\begin{keyword}
qualitative spatial reasoning; semi-algebraic regions; atoms; internal topological structure; link graph
\end{keyword}

\end{frontmatter}

\section{Introduction}
Qualitative spatial reasoning (QSR) is an established research subfield in Artificial Intelligence and Geographical Information Science (GISc). One basic requirement of QSR is to provide qualitative, non-numerical information at various levels of detail for spatial representation and reasoning. QSR is an interdisciplinary research of computer science, cognition, and geography, with important applications in a number of areas including Geographical Information Systems (GISs), Content-Based Image Retrieval, Computer Graphics, Robotics \citep{CohnR07}.

Among the many aspects of space, topology is the most basic and important. A major part of QSR research focuses on the study of topological
relations  and topological properties. The 9-Intersection Model (9IM) \citep{EgenhoferH90}, perhaps the most well-known
topological relation model in GISc, identifies the same set of eight basic topological relations as the popular RCC8 model \citep{RandellCC92} in qualitative spatial reasoning. The 9IM was initially defined for simple regions \citep{EgenhoferH90}, \emph{i.e.} regions that are homeomorphic to a closed disk (see Figure~\ref{fig:RCC8}). The insufficiency of simple regions for representing spatial phenomena and entities, \emph{e.g.} countries, has been widely acknowledged. Various models have therefore been proposed to represent complicated spatial regions. These include \emph{simple regions with holes} \citep{EgenhoferCF94}, \emph{composite regions} \citep{ClementiniDC95}, and \emph{complex regions} \citep{SchneiderB06} (cf. Figure~\ref{fig:3comb}). Moreover, the 9IM has been extended to represent topological relations between complex regions \citep{Li06-ijgis,SchneiderB06}.
\begin{figure}[h]
\centering
\begin{tabular}{c}
\includegraphics[width=.95\textwidth]{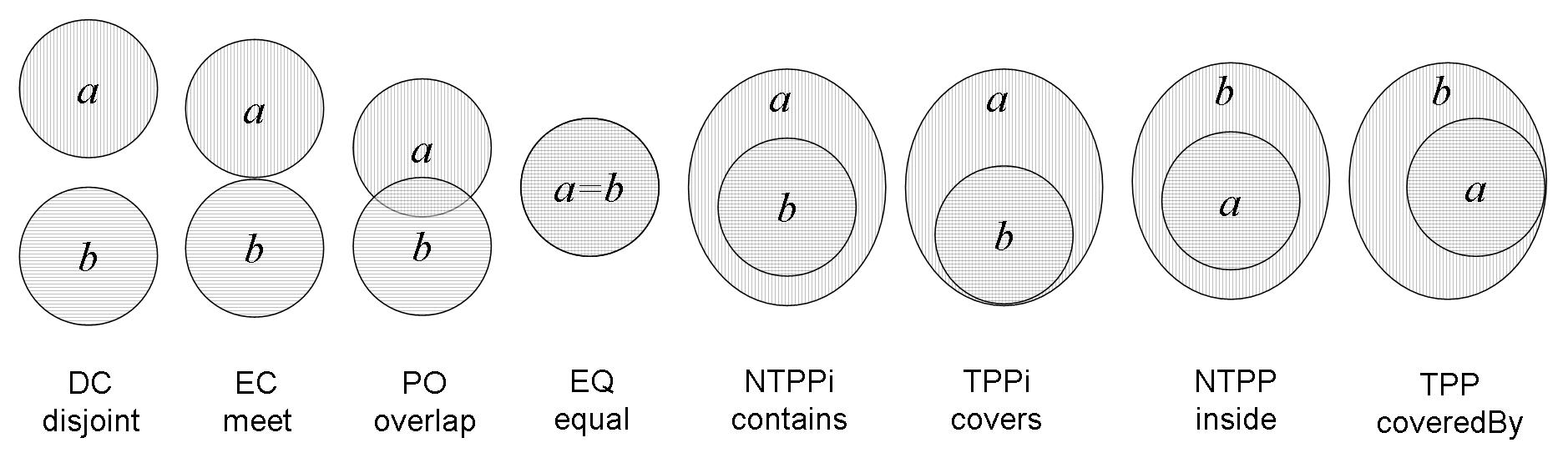}
\end{tabular}
\caption{The eight topological relations between two simple regions}
\label{fig:RCC8}
\end{figure}


This paper will not propose a new topological relation model. Instead, we will focus on the \emph{internal topological structure} of complex regions, \emph{i.e.} how a complex region is  constructed from a particular set of atomic simple regions. A better understanding of the internal topological structure of a complex region may help us (i) in answering queries such as ``Does $X$ have a hole that has an island?" and ``Does $X$ have a connected component that is disjoint from all other connected components of $X$?", (ii) in enhancing spatial reasoning by, \emph{e.g.} reducing the ambiguity of composition-based qualitative inferences \citep{VasardaniE08}; and (iii) in describing dynamic changes of spatial regions \citep{JiangW09}.

The aim of this paper is to establish a qualitative model for characterizing the internal topological structure of complex regions. For each bounded region $X$, we try to decompose $X$ into a set of \emph{connected components} and then represent the internal topological structure of $X$ using these components. We call  the closure of each connected component of $X^\circ$ (the interior of $X$) a \emph{face} of $X$, and call the closure of each bounded connected component of $X^e$ (the exterior of $X$) a \emph{hole} of $X$ (see Section~2.1 for unexplained notions).  We call the union of all faces and holes of $X$ the \emph{envelope} of $X$, written as $\widehat{X}$.

On the basis of certain practical assumptions, we are able to show that each face/hole of a bounded region $X$ is a \emph{simple region with holes} \citep{EgenhoferCF94} and the envelope of $X$ is a \emph{composite region} \citep{ClementiniDC95}. This suggests that a complex region can be represented as a structured combination of simple regions with holes and composite regions.  
Inspired by this observation, we construct a layered graph for each bounded region $X$, which characterizes most of the internal topological structure of $X$. The graph has a root node, which represents the unbounded connected component of the exterior of $X$, and each non-root node represents a bounded connected component of either $X^\circ$ or $X^e$, and two nodes are connected if they share a common arc, which happens only when exactly one is a component of the interior of $X$. We call this graph the \emph{link graph} of $X$, which reflects most of the internal topological structure of $X$. In particular, from the link graph of $X$, we can always answer basic queries such as ``\emph{How many faces does $X$ (or a hole of $X$) have}", which the tree model of Worboys and Bofakos \cite{WorboysB93} cannot always answer (cf. Section~6.2 of this paper). Moreover, the tree model of $X$ can also be efficiently computed from the link graph of $X$, but not vice versa.

We further define the \emph{atoms} of $X$ as those simple regions involved in the envelope, the faces, and the holes of $X$. Consider the regions illustrated in Figure~\ref{fig:3comb}. It is evident that $A$ has one face, one hole, and two atoms; $B$ has two faces, and two atoms; and $C$ has two faces, one hole, and four atoms.


\begin{figure}[h]
\centering
\begin{tabular}{ccc}
\includegraphics[width=.22\textwidth]{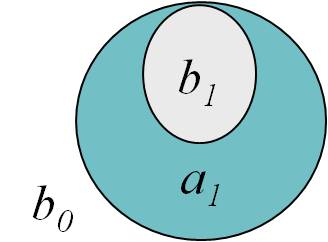}
&
\includegraphics[width=.2\textwidth]{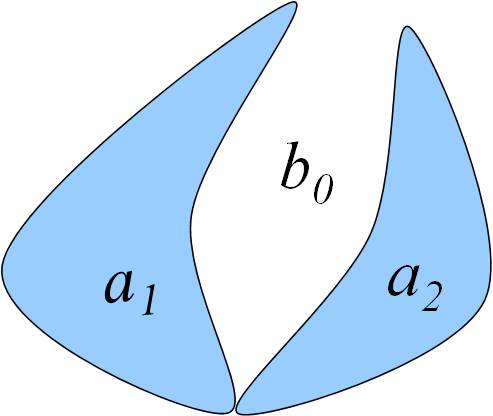}
&
\includegraphics[width=.22\textwidth]{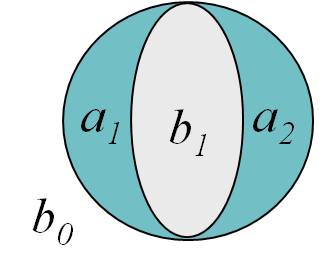}
\\
$A$ & $B$ & $C$
\end{tabular}
\caption{Three plane regions $A,B,C$ 
}
\label{fig:3comb}
\end{figure}

One very interesting property of this atom set is that it is necessary and sufficient for \emph{locally} determining the \emph{global} 9-intersection relation  \citep{Li06-ijgis,SchneiderB06} between bounded regions. This means that, on one hand, the 9-intersection relation between two bounded regions $A,B$ is uniquely determined by the topological RCC8 relations between atoms of $A,B$; on the other hand, the 9-intersection relation between $A$ and some $B'$ may be undetermined if any simple region is removed from the atom set of $A$.

The rest of this paper proceeds as follows. Section 2 introduces basic topological notions and various definitions of spatial regions, as well as some preliminary results. Note that plane regions may have arbitrarily complicated topological properties. For practical applications, it is necessary to consider regions that are finitely representable. Section~3 introduces the definition of semi-algebraic regions and proves that a face/hole (the envelope, resp.) of a bounded semi-algebraic region is a simple region with holes (composite region, resp.). Section~4 establishes a complete characterization of the (\emph{global}) 9-intersection relations between two bounded semi-algebraic regions \emph{locally} in terms of the RCC8 relations between their atoms. The graph model is then introduced and discussed in Section~5. A detailed comparison of our graph model with related works, in particular the tree model of Worboys and Bofakos \cite{WorboysB93}, is given in Section~6. The last section concludes the paper.

\section{Backgrounds}
In this section, we first introduce basic topological notions in general, and the reader is referred to \citep{Munkres00} for a detailed introduction of point-set topology. We then concern ourselves with the real plane (with the usual metric and topology), and introduce several important concepts related to bounded plane regions. In particular, we will define a face or hole of a bounded region, and define when a bounded region is a \emph{simple region with holes} or a \emph{composite region}.
\subsection{Basic Topological Notions}

A \emph{topology} $\mathcal{T}$ over a nonempty set $X$ is a subset of the powerset $\wp(X)$ that is closed under arbitrary unions and finite intersections. As a consequence, we know each topology over $X$ contains the empty set $\varnothing$ and $X$ itself.  We call $(X,\mathcal{T})$ a \emph{topological space}, and call each set in $\mathcal{T}$ an \emph{open set}. For a subset $A$ of $X$, the \emph{interior} of $A$, denoted by $A^\circ$, is the largest open set which is contained in $A$; the \emph{exterior} of $A$, denoted by $A^e$, is the interior of $X\setminus A$, the complement of $A$  (cf. Figure~\ref{fig:pregion}(a)). Given a topological space $(X,\mathcal{T})$, a set $A\subseteq X$ is
a \emph{closed set} if its complement is open. For
any $B\subseteq X$, the \emph{closure} of $B$, written $\overline{B}$, is the smallest closed
set which contains $B$. A closed set $A$ is
\emph{regular} if $\overline{A^\circ}=A$. The \emph{boundary} of a subset $A$ of $X$, denoted by $\partial A$,
is defined to be the set difference of $\overline{A}$ and $A^\circ$,
\emph{i.e.} $\partial A=\overline{A}\setminus A^\circ$ (cf. Figure~\ref{fig:pregion}(a)).

A subset $A$ of a topological space $(X,\mathcal{T})$ is \emph{connected} if for any two disjoint open sets $U,V$ such
that $A\subseteq U\cup V$ we have either $U\cap A$ or $V\cap A$  is empty. A connected set $U$ is called a \emph{connected component} of $A$ if $U$ is a maximally connected subset of $A$. Two different connected components are clearly disjoint.

A map $f:X\rightarrow Y$ between two topological spaces is \emph{continuous} iff, for every open set $U\subset Y$, the inverse image $f^{-1}(U)=\{x\in X: f(x)\in U\}$ is open in $X$. A bijective map $f$ between two topological spaces is a \emph{homeomorphism} if both $f$ and its inverse map $f^{-1}$ are continuous. A \emph{path} from a point $P$ to a point $Q$ in $X$ is a continuous map $f$ from the unit interval $[0,1]$ (with the usual topology) to $X$ with $f(0)=P$ and $f(1)=Q$. A subset $U$ of $X$ is \emph{path-connected} if there is a path joining any two points in $U$.

\subsection{Complex Regions and Their Components}
In this paper, we consider one particular topological space --- the real plane $\mathbb{R}^2$, with the topology induced by the usual metric. Let $P,Q$ be two points in $\mathbb{R}^2$, and $X$ be a subset of $\mathbb{R}^2$. We write $d(P,Q)$ for the distance between $P,Q$, and write $d(P,X)=\inf\{d(P,Q): Q\in X\}$. For $P\in\mathbb{R}^2$, $\delta>0$, we write $B^\circ(P,\delta)=\{Q: d(P,Q)<\delta\}$ for the open disk centred at $P$ with radius $\delta$.

For a set $A$ in $\mathbb{R}^2$, we say $A$ is \emph{bounded} if $A$ is contained in a disk, $A$ is \emph{open} if $A$ contains an open disk centred at $P$ for each $P$ in $A$, we say $A$ is a \emph{plane region} is $A$ is a regular closed set in $\mathbb{R}^2$. Figure~\ref{fig:pregion} shows a bounded plane region, and its interior, exterior, and boundary.

A \emph{Jordan arc} in $\mathbb{R}^2$ is the image of an injective continuous map of a closed interval into the plane, and a \emph{Jordan curve} or a \emph{simple closed curve} in $\mathbb{R}^2$ is the image of a continuous map $\varphi:[0,1]\rightarrow \mathbb{R}^2$ such that $\varphi(0)=\varphi(1)$ and the restriction of $\varphi$ to $[0,1)$ is injective. Let $C$ be a Jordan curve in the plane. Jordan Curve Theorem says that the complement of $C$ in the plane consists of exactly two connected components. One is bounded, the other is unbounded, and the curve $C$ is the boundary of each component. 

It is well-known that any connected open subset of $\mathbb{R}^2$ is path-connected.

\begin{figure}[h]
\centering
\begin{tabular}{cc}
\includegraphics[width=.55\textwidth]{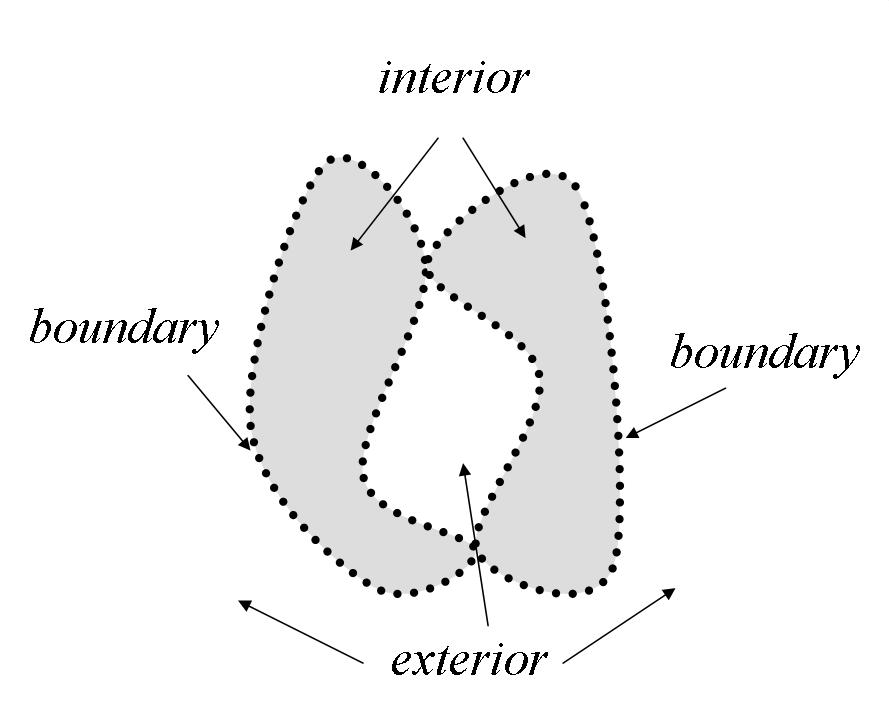}
&
\includegraphics[width=.26\textwidth]{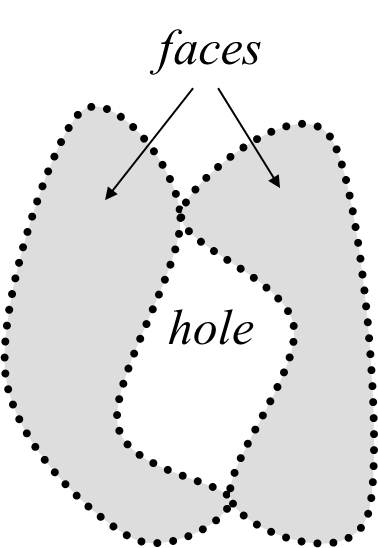}\\
(a) & (b)
\end{tabular}
\caption{A complex region and its (a) interior, boundary, and exterior; (b) faces and hole}
\label{fig:pregion}
\end{figure}
Because a plane region is a closed set, its set complement is open. We have 
\begin{proposition}\label{prop:complement=b0}
Let $A$ be a plane region. Then the exterior of $A$ is exactly its set complement. Assume, moreover, $A$ is bounded. Then  the exterior of $A$ has a unique unbounded connected component.
\end{proposition}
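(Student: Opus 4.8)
The plan is to prove the two assertions separately. For the first claim, let $A$ be a plane region, so by definition $A$ is regular closed, hence in particular closed. First I would recall that the exterior $A^e$ is defined as the interior of the complement $X \setminus A$. Since $A$ is closed, its complement $X \setminus A$ is open, and the interior of an open set is the set itself; therefore $A^e = (X\setminus A)^\circ = X \setminus A$. This is essentially immediate from the definitions and the hypothesis that $A$ is closed, so I expect no real difficulty here.

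For the second claim, assume in addition that $A$ is bounded. The exterior $A^e = X \setminus A$ is an open set in the plane, so it decomposes into connected components, each of which is open and connected. I would argue that exactly one of these components is unbounded. Since $A$ is bounded, it is contained in some closed disk $D$ of radius $R$; then the entire region $X \setminus D$, the outside of the disk, lies in $X \setminus A = A^e$. The complement of a closed disk in the plane is connected (indeed path-connected: any two points outside a disk can be joined by an arc going around the disk). Hence $X \setminus D$ sits inside a single connected component $U$ of $A^e$, and this $U$ is unbounded because $X \setminus D$ already contains points arbitrarily far from the origin.

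It remains to show uniqueness, i.e.\ that no other component of $A^e$ can be unbounded. Here I would use that every connected component distinct from $U$ is, by the maximality in the definition of connected component, disjoint from $U$, and in particular disjoint from $X \setminus D$. Thus any component other than $U$ is entirely contained in the disk $D$, and is therefore bounded. This forces $U$ to be the \emph{unique} unbounded component of $A^e$.

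The main obstacle, such as it is, lies in the uniqueness argument: one must be careful to justify that the complement of the closed disk is connected (so that it lands in a single component rather than being split across several) and that distinct components are genuinely disjoint, which follows from the maximality built into the definition of connected component stated earlier in the excerpt. Once the connectedness of $X \setminus D$ and the containment $X\setminus D \subseteq A^e$ are in hand, both existence and uniqueness of the unbounded component follow cleanly, and no delicate point-set pathology arises because the boundedness of $A$ confines all other components inside $D$.
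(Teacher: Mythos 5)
Your proof is correct and takes essentially the approach the paper intends: the paper merely observes that the complement of a closed set is open and declares the proposition ``clear,'' while you supply the standard details. Your uniqueness argument --- enclosing $A$ in a disk $D$, noting that $X\setminus D$ is connected and hence lies in a single unbounded component, and that every other component is disjoint from it and therefore trapped inside $D$ --- is exactly the right way to make the claim precise, and it is complete.
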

\begin{definition}\label{dfn:components}
For a bounded plane region $A$, we call a connected component of the interior (exterior, resp.) of $A$ an \emph{interior} (\emph{exterior}, resp.) component of $A$, and call the \underline{closure} of each interior (bounded exterior, resp.) component of $A$ a \emph{face} (\emph{hole}, resp.) of $A$.  We  write $\face(A)$ ($\hole(A)$, resp.) for the set of faces (holes, resp.) of $A$, and write $\widehat{A}$ for the union of all faces and holes of $A$, called the \emph{envelope} of $A$.
\end{definition}
For example, the region shown in Figure~\ref{fig:pregion} has two faces and one hole.  We note that a component is always an open connected set.


It is straightforward to show that each face/hole is a regular closed set, \emph{i.e.}

\begin{proposition}
The closure of each interior or exterior component of a bounded plane region is a plane region.
\end{proposition}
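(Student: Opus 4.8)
The plan is to reduce the statement to a single general fact: in the plane, the closure of a connected component of an open set is a regular closed set. This suffices because, by Definition~\ref{dfn:components}, every component of a bounded plane region $A$ --- be it positive, a hole, or the unbounded component --- is by construction the closure of a connected component of an open set, namely either the interior $A^\circ$ (open by definition) or the exterior $A^e$ (which, by Proposition~\ref{prop:complement=b0}, is the open set complementary to $A$). So I would fix an open set $O \subseteq \mathbb{R}^2$ together with a connected component $U$ of $O$, and show that $\overline{U}$ is regular closed, that is, that $\overline{(\overline{U})^\circ} = \overline{U}$.

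First I would show that $U$ itself is open. This is the one place where the specific topology of the plane, rather than that of an arbitrary topological space, is used: $\mathbb{R}^2$ is locally connected, so every point $x \in U \subseteq O$ has a connected open neighbourhood $V$ with $x \in V \subseteq O$. Since $V$ is connected, meets $U$, and lies in $O$, the maximality of the component $U$ forces $V \subseteq U$; hence $U$ is a neighbourhood of each of its points, i.e. $U$ is open. I expect this local-connectedness step to be the only real obstacle, in the sense that it is the one assertion that genuinely requires more than formal manipulation of closure and interior operators; everything else is purely set-theoretic.

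With $U$ open, the regularity of $\overline{U}$ follows from two opposite inclusions. On one hand, $(\overline{U})^\circ \subseteq \overline{U}$, so taking closures and using that $\overline{U}$ is closed gives $\overline{(\overline{U})^\circ} \subseteq \overline{U}$. On the other hand, $U$ is an open subset of $\overline{U}$, so it is contained in the largest open set inside $\overline{U}$, namely $U \subseteq (\overline{U})^\circ$; taking closures yields $\overline{U} \subseteq \overline{(\overline{U})^\circ}$. Combining the two inclusions gives $\overline{(\overline{U})^\circ} = \overline{U}$, so $\overline{U}$ is regular closed. Applying this with $O = A^\circ$ and with $O = A^e$ shows that every positive component, every hole, and the unbounded component of $A$ is a regular closed set, hence a plane region, as required.
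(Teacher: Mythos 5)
Your proof is correct. The paper itself offers no argument beyond the one-line remark ``By definition, each component is a regular closed set in the plane,'' so your write-up simply supplies the justification the paper treats as immediate: you correctly identify that every component is the closure of a connected component of one of the open sets $A^\circ$ or $A^e$, that local connectedness of $\mathbb{R}^2$ is what makes such a component $U$ open, and that the two inclusions $\overline{(\overline{U})^\circ}\subseteq\overline{U}$ and $U\subseteq(\overline{U})^\circ$ then give regularity. Your diagnosis that the local-connectedness step is the only non-formal ingredient is exactly right; everything else is closure/interior bookkeeping.
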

This explains why we consider in Definition~\ref{dfn:components} the closure of connected components of $A^\circ$ and $A^e$. A bounded region may have an infinite number of faces and/or an infinite number of holes, but disk-like regions have only one face and no hole. Topologically speaking, these are the simplest plane regions.
\begin{definition}  \label{dfn:simpleregion}
A set $A$ on the real plane is called a \emph{simple region} if it is homeomorphic to a closed disk.
\end{definition}
We next recall two other special kinds of bounded plane regions. We note that the definitions given here are slightly different from the original definitions given in \citep{EgenhoferCF94} and \citep{ClementiniDC95}. Comparisons will be given later in Section~6.
\begin{definition}\citep{EgenhoferCF94} \label{dfn:srh}
A plane region $A$ is called a \emph{simple region with holes} if there exist
a set of simple regions $\{a_0,a_1,\cdots, a_k\}$ ($k\geq 0$) such that
\begin{itemize}
\item [(1)] $A=a_0\setminus\bigcup_{i=1}^k a_i^\circ$, and $A$ has a connected interior;
\item [(2)] $a_i\subset a_0$, and $\partial a_i\cap\partial a_0$ is either empty or a singleton for each
 $1\leq i\leq k$;
\item [(3)] each $a_i\cap a_j$ is either empty or a singleton for any $1\leq i\not=j \leq k$.
\end{itemize}
For clarity, we often write $A=(a_0;a_1,\cdots,a_k)$. 
\end{definition}
For a simple region with holes $A=(a_0;a_1,\cdots,a_k)$, it is worth noting that $a_1,\cdots,a_k$ are the only holes of $A$ in the sense of Definition~\ref{dfn:components}, and $A$ is its only face. Also note that $a_0$ is the union of the face and holes of $A$, hence identical to $\widehat{A}$, the \emph{envelope} of $A$. In \citep{EgenhoferCF94}, $a_0$ is called the \emph{generalized region} of $A$.

A simple region with holes has a connected interior and, possibly, a disconnected exterior. By contrast, a \emph{composite region} has a connected exterior and, possibly, a disconnected interior.

\begin{definition}\citep{ClementiniDC95}
\label{dfn:composite-region}
A plane region $A$ is called a
\emph{composite region} if there exist a set of simple regions
$\{a_1,a_2,\cdots,a_k\}$ ($k\geq 1$) such that
\begin{itemize}
\item [(1)]
$A=\bigcup_{i=1}^k a_i$ and $A$ has a connected exterior;
\item [(2)] each $a_i\cap a_j$ is either empty or a singleton for any $1\leq i\not=j\leq n$.
\end{itemize}
\end{definition}
It is clear that the composite region in the above definition contains $k$ faces (\emph{i.e.} $a_1,\cdots,a_k$) and has no holes.
We stress that in this paper when we say a region is a composite region, it is always a bounded region as in the above definition. 

Suppose $A$ is a simple region with holes or a composite region. It is natural to call those simple regions appearing in the definition of a simple region with holes or a composite region `the atoms of $A$'. This is, however, not immediately clear for general plane regions. Take the complex region $C$ in Figure~\ref{fig:3comb} as an example. This region has two faces and one hole, which all happen to be simple regions. Another three simple regions can be obtained by taking the unions of adjacent faces and holes, \emph{viz.} $a_1\cup b_1,a_2\cup b_1, a_1\cup b_1\cup a_2$. Which simple region should be taken into account? Section~3 provides a detailed examination of this question. 


The following proposition shows that the envelope of $A$ is identical to the set complement of the unbounded exterior component of $A$. 
\begin{proposition}\label{prop:generalized-region}
Suppose $A$ is a bounded region, and $\widehat{A}$ is the envelope of $A$. Then $\widehat{A}$ is identical to the set complement of the unbounded exterior component of $A$. This implies that  $\widehat{A}$ has a connected exterior and no holes.
\end{proposition}
\begin{proof}
Write $U$ for the unbounded exterior component of $A$, and $V$ for the exterior of $\widehat{A}$. We assert that $U=V$. Because it is disjoint from any face or hole of $A$, $U$ is also disjoint from $\widehat{A}$. This implies that $U$ is contained  in $V$, which equals to the set complement of the closed set $\widehat{A}$. On the other hand, $V=\mathbb{R}^2\setminus \widehat{A}$ is also contained in $U$ because each point outside $\widehat{A}$ is contained in neither $A$ nor any of its holes. Hence $U=V$. Because $U$ is connected, we know $\widehat{A}$ has a connected exterior, and therefore has no holes. 
\end{proof}
A bounded plane region may have an infinite number of faces or holes.  Even worse, the intersection or union of two simple regions may have an infinite number of faces or holes. This makes general  plane regions impracticable for representing spatial objects in computers. For real-world applications, it is reasonable to require each plane region to be \emph{finitely} representable.

\section{Semi-Algebraic Regions}

In this section, we consider a very important class of plane regions that are finitely representable.


\begin{definition}
 A subset of the plane is called \emph{semi-algebraic} if it can be
defined by a Boolean combination of polynomial inequalities. A plane region is called a \emph{semi-algebraic region}  if it is a semi-algebraic set.
\end{definition}
For example, $(2\leq 2x^2+3y^2 \leq 3) \wedge (x-y=1)$ is a semi-algebraic set, but $ (0\leq x \leq 2) \wedge (0\leq y\leq \sin x)$  is not. Closed disks and polygons are all examples of semi-algebraic regions, but any region that has an infinite number of connected components is not a semi-algebraic region.  

Most sets in the plane that arise in practice can be closely approximated by semi-algebraic sets \cite{Basu06}, and semi-algebraic sets have been widely adopted in spatial database research (see \emph{e.g.} \cite{Benedikt+06,Papadimitriou+99}) and computational geometry (see \emph{e.g.} \cite{Basu06}).  Recently, it was proved \citep[Theorem 6]{KontchakovPZ10} that any satisfiable set of topological constraints has a solution using bounded semi-algebraic regions, where a topological constraint is either a binary RCC8 constraint or a unary connectedness constraint. 

In the following, we recall two properties of semi-algebraic sets that are related to the internal topological structure of plane regions. First, each semi-algebraic region is finitely representable in the following sense. 

\begin{proposition} (cf. \cite{Basu06,Coste00})
\label{prop:decomposition-sasets}
Let $A$ be an arbitrary semi-algebraic set of $\mathbb{R}^2$. Then $A$ can be decomposed as the disjoint union of finitely many pieces which are homeomorphic to either a singleton or the open unit interval $(0,1)$ or the open unit disk. As a consequence, $A$ has a finite number of connected components, each of which is semi-algebraic.
\end{proposition}
 
 Second, semi-algebraic sets are closed under set operations.
 \begin{proposition}(cf. \cite{Basu06,Coste00})
 \label{prop:closure-sasets}
 Semi-algebraic sets are closed under intersection, union, complement, and projection. Moreover, the topological closure, interior, and boundary of a semi-algebraic set are all semi-algebraic sets. \end{proposition}

In the remainder of this section, we give a close examination of components of bounded semi-algebraic regions. Suppose $A$ is a bounded semi-algebraic region. By Proposition~\ref{prop:closure-sasets}, we know $A^\circ$, $A^e$, and $\partial A$ are all bounded semi-algebraic sets, and, by Proposition~\ref{prop:decomposition-sasets}, $A$ has a finite number of interior or exterior components. 

Using these properties of semi-algebraic sets, we have the following characterizations for envelopes, faces and holes of semi-algebraic regions.
\begin{proposition}\label{prop:envelope-is-cr}
Let $A$ be a bounded semi-algebraic region. Then the envelope of $A$ is a composite region, and any face or hole of $A$ is a simple region with holes.
\end{proposition}
\begin{proof}
See Appendix A.
\end{proof}

Using the above results, we can characterize simple regions with holes and composite regions in terms of connectedness.
\begin{proposition} \label{prop:srh-con-interior}
A bounded semi-algebraic region is a simple region with holes (composite region, resp.) iff it has a connected interior (exterior, resp.).
\end{proposition}
\begin{proof}
Note that the `only if' part follows from the definitions of simple region with holes and composite region. We need only consider the `if' part.

Suppose $A$ is a bounded semi-algebraic region that has a connected interior. It is clear $A$ is its only face. By Proposition~\ref{prop:envelope-is-cr} we know $A$ is a simple region with holes. Suppose $A$ is a bounded semi-algebraic region that has a connected exterior. Because $A$ has no holes, we know $\widehat{A}$ is the union of all faces of $A$, and hence $A=\widehat{A}$. Therefore, by Proposition~\ref{prop:envelope-is-cr} we know $A$ is a composite region. 
\end{proof}
The above statements do not hold for general plane regions. For example, suppose $A,B$ are two  bounded connected regions such that $A\cup B$ is the unit closed disk, $A^\circ\cap B^\circ$ is empty, and $\partial A=\partial B$.\footnote{The existence of these $A,B$ is guaranteed by a theorem conceived by Brouwer. An informal description can be found in \url{http://www.cut-the-knot.org/do_you_know/brouwer.shtml}, also see \cite[Lemma~2.5]{Li06-ijgis}.} Then $A$ has only one face, \emph{i.e.} $A$ itself, and only one hole, \emph{i.e.} $B$. But $A$ is not a simple region with holes because the second condition of Definition~\ref{dfn:srh} is violated. Therefore, there are bounded regions with a connected interior that are not simple regions with holes.

We are now ready to define the atom set of a bounded semi-algebraic region.

\begin{definition} 
\label{dfn:atoms}
Let $o$ be a simple region. We say $o$ is an \emph{atom} of a simple region with holes $A$ if it is the envelope or a hole of $A$; $o$ is an \emph{atom} of a composite region $B$ if $o$ is a face of $B$; and say $o$ is an \emph{atom} of a bounded semi-algebraic region $C$ if $o$ is an atom of a face, a hole, or the envelope of $C$. For a bounded semi-algebraic region $X$, we write $\atom(X)$ for the set of all atoms of $X$.
\end{definition}

Take the three regions in Figure~\ref{fig:3comb} as examples. $A$ has a face and a hole, $B$ has two faces and no holes, and $C$ has two faces and one hole. Therefore, both $A$ and $B$ have two atoms, but $C$ has four atoms, \emph{viz.} $\overline{a_1}$, $\overline{a_2}$, $\overline{b_1}$, and $\widehat{C}$.

In the following section, we show atoms in $\atom(A)$ are necessary and sufficient to determine the global nine-intersection (topological) relations between $A$ and all other semi-algebraic regions.

\section{Local Characterization of the 9-Intersection Relations}

For two bounded semi-algebraic regions $A$ and $A^\prime$, the 9-intersection relation
\citep{EgenhoferH90} between $A$ and $A^\prime$ is defined as
\begin{equation}\label{eq:9int}
M(A,A^\prime)=\left(%
\begin{array}{ccc}
  A^\circ\cap {A^\prime}^\circ & A^\circ\cap\partial {A^\prime} & A^\circ\cap {A^\prime}^e \\
  \partial A\cap {A^\prime}^\circ & \partial A\cap\partial {A^\prime} &\partial A\cap {A^\prime}^e\\
  A^e\cap {A^\prime}^\circ & A^e\cap\partial {A^\prime} & A^e\cap {A^\prime}^e \\
\end{array}%
\right),
\end{equation}
by considering the content of the intersection (\emph{i.e.} whether it is empty or not). Among the $2^9=512$ possible 9-intersection relations, only eight are realizable between simple regions. These are exactly the RCC8 relations when restricted to simple regions (cf. Figure~\ref{fig:RCC8}). There are, however, 33 different 9-intersection relations if we consider bounded `non-exotic' plane regions \citep{SchneiderB06,Li06-ijgis}. These relations have not been implemented in the current GIS systems.

This section shows that the 9-intersection relations between two bounded semi-algebraic regions $A,A'$ can be derived from the 9-intersection relations between their atoms. Suppose $\atom(A)=\{o_1,\cdots,o_k\}$ and $\atom(A')=\{o_1',\cdots$, $o_l'\}$. This means that $M(A,A')$ can be derived from the RCC8 relations in the following $k\times l$ matrix
\begin{equation}\label{eq:9int-oo}
\left(%
\begin{array}{cccc}
  M(o_1,o_1') & M(o_1,o_2') & \cdots & M(o_1,o_l')  \\
  M(o_2,o_1') & M(o_2,o_2') & \cdots & M(o_2,o_l')  \\
\vdots & \vdots & \cdots & \vdots \\
  M(o_k,o_1') & M(o_k,o_2') & \cdots & M(o_k,o_l') 
\end{array}%
\right),
\end{equation}
where each $M(o_i,o_j')$ in Eq.\ref{eq:9int-oo} corresponds to a unique RCC8 relation \dc, \ec, \eq, \po, \tpp, \tppi, \ntpp, or \ntppi\ (cf. Figure~\ref{fig:RCC8}).

We show in Section~4.1 that these atoms are sufficient. Suppose an atom $o_i$ is removed from $\atom(A)$. We show in Section~4.2 there are two simple regions $o'$ and $o''$ such that $M(A,o')\not=M(A,o'')$ but $M(o_j,o')=M(o_j,o'')$ for all $o_j\in\atom(A)\setminus\{o_i\}$.  This implies that each atom $o_i$ in $\atom(A)$ is necessary.  

\subsection{The Sufficiency Theorem}

\begin{theorem}\label{thm:complete}
The 9-intersection relation between two bounded semi-algebraic regions $A,A^\prime$ can be
uniquely determined by the 9-intersection relations between the atoms of $A$ and
$A^\prime$.
\end{theorem}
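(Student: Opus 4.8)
The plan is to reduce the status (empty or nonempty) of each of the nine cells of $M(A,A')$ to the pairwise $9$-intersection relations among the simple regions in $\atom(A)\cup\atom(A')$. The first step is a \emph{local determination} lemma: for a bounded semi-algebraic region $A$, each of the three parts $A^\circ$, $\partial A$, $A^e$ can be written as a fixed Boolean combination of the interiors, boundaries, and exteriors of the atoms of $A$. Indeed, $A$ has finitely many components by semi-algebraicity, $A=\bigcup_p p$ ranges over its positive components, and by Proposition~\ref{prop:unbounded-component-is-cr} each positive component $p$ is a simple region with holes $p=(\widehat p;a_1,\dots,a_k)$ whose generalized region $\widehat p$ and holes $a_i$ are atoms of $A$, with $p^\circ=\widehat p^{\,\circ}\setminus\bigcup_i a_i$ and $p^e=\widehat p^{\,e}\cup\bigcup_i a_i^\circ$. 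Hence, for a point $x$: $x\in A^\circ$ iff some $p$ has $x\in\widehat p^{\,\circ}$ and $x\in a_i^{\,e}$ for every hole $a_i$ of $p$; $x\in A^e$ iff every $p$ has $x\in\widehat p^{\,e}$ or $x\in a_i^{\,\circ}$ for some hole $a_i$; and $x\in\partial A$ otherwise. In short, whether $x$ lies in $A^\circ$, $\partial A$, or $A^e$ is a fixed function of the ``type vector'' recording, for each atom $a$, whether $x\in a^\circ$, $x\in\partial a$, or $x\in a^e$; symmetrically for $A'$.

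I would then combine the two local descriptions. A cell $A^{[s]}\cap A'^{[t]}$ with $[s],[t]\in\{\circ,\partial,e\}$ is nonempty exactly when some point $x$ has a joint type over $\atom(A)\cup\atom(A')$ satisfying both the $A$-side and the $A'$-side condition. Expanding these conditions into disjunctive form reduces the cell to a finite disjunction of questions ``does some point realize a prescribed joint type $\tau=(\tau_a)_a$?'', with each $\tau_a\in\{\circ,\partial,e\}$. Every \emph{pairwise} part of such a question — whether a single point is simultaneously of type $\tau_a$ relative to $a$ and of type $\tau_b$ relative to $b$ — is exactly an entry of $M(a,b)$, which is given. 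Thus the theorem follows once the realizability of each relevant joint type is shown to be determined by its pairwise projections.

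This last reduction is the crux, and it is \emph{false} for arbitrary simple regions: three disks can pairwise overlap with empty common intersection, so pairwise data does not fix triple intersections, let alone whether a union of holes \emph{covers} a lens. The proof must therefore exploit the geometry packaged in the atom set. Two structural facts are the levers. First, the atoms of one region form a laminar family: any two atoms of $A$ are either externally related (meeting in at most a point) or nested, since they arise from the hierarchical component decomposition; likewise for $A'$. Second, in a simple region with holes $(\widehat p;a_1,\dots,a_k)$ each hole meets $\partial\widehat p$ in at most a singleton and $p^\circ=\widehat p^{\,\circ}\setminus\bigcup_i a_i$ is connected, so the holes cannot cover any collar of $\partial\widehat p$ inside $\widehat p$; hence whenever the lens $\widehat p^{\,\circ}\cap\widehat{p'}^{\,\circ}$ is nonempty, it contains points near $\partial\widehat p$ lying off every hole of $p$, and the connected set $p^\circ$ can be followed into $\widehat{p'}^{\,\circ}$ and off the holes of $p'$ to yield a common interior point. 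The clean way to organize this is to isolate a core lemma: for two simple regions with holes $p,p'$, the relation $M(p,p')$ is determined by the pairwise relations among $\{\widehat p,a_1,\dots,a_k\}\cup\{\widehat{p'},a_1',\dots,a_\ell'\}$; granting it, the formulas above express $M(A,A')$ as a Boolean combination of the component-wise relations $M(p,p')$ (a disjunction over pairs of positive components for the interior cells, with the analogous reductions using holes and the unbounded component for the boundary and exterior cells). I expect the main difficulty to be the core lemma itself — turning the intuition ``holes keep away from the boundary, so they cannot hide the intersection'' into a rigorous check that each relevant joint type over the atoms is realizable precisely when its pairwise projections are, with connectedness of the interiors and the singleton-contact condition supplying the required amalgamation.
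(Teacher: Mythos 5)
Your proposal follows essentially the same route as the paper: a two-stage reduction of the nine cells first to intersections between components (your Boolean ``type vector'' decomposition plays the role of the paper's intersection degrees in Lemmas~\ref{lemma:partial2}--\ref{lemma:partial;e}) and then to pairwise relations between atoms, with your ``core lemma'' for two simple regions with holes being exactly the paper's Lemma~\ref{lemma:srhs-intersection} together with Lemmas~\ref{lemma:partial2-int} and~\ref{lemma:bounded-unbounded-intersection}. Your diagnosis that this core lemma carries the real geometric content is accurate --- the paper states Lemma~\ref{lemma:srhs-intersection} without proof --- and your sketch of why it holds (laminarity of the atom family, holes meeting $\partial\widehat{p}$ in at most a singleton, connectedness of $p^\circ$) is at, if not above, the paper's own level of rigor.
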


To prove this theorem, we need only prove that each intersection  in Eq.~\ref{eq:9int} can be derived from the topological relations between the atoms of $A$ and $A'$. Note that the exterior-exterior intersection is always nonempty because $A$ and $A^\prime$ are bounded. For the boundary-boundary intersection we have
\begin{lemma}\label{lemma:partial2}
$\partial A\cap\partial A^\prime \not=\varnothing$ {\rm iff} there exist $a\in\atom(A)$ and $a'\in\atom(B)$ such that  $\partial
a\cap\partial a^\prime\not=\varnothing$.
\end{lemma}
\begin{proof}
This is because the boundary of a bounded semi-algebraic region is the union of the boundaries of all its atoms.
\end{proof}
It is clear that two simple regions $a,a'$ share a common boundary point iff they are related by either \ec, \po, \eq, \tpp\ or \tppi\ (see Figure~\ref{fig:RCC8}). Therefore, Lemma~\ref{lemma:partial2} can be restated to say that $A$ and $A'$ share a common boundary point iff they have atoms $a,a'$ that are related by either \ec, \po, \eq, \tpp\ or \tppi. In other words, we know $A$ and $A'$ share a common boundary point as long as one of the above five relations appears in Eq.~\ref{eq:9int-oo}.

As for the remaining seven intersections, we do not have a direct reduction as above. Instead, we first show (Lemmas~\ref{lemma:oo}-\ref{lemma:partial;e}) that each of these intersections can be uniquely reduced to checking 
\begin{itemize}
\item [(TC1)] Is the envelope of one region a subset of the envelope of the other region?
\item [(TC2)] Is a face/hole of one region a subset of the envelope of the other region?
\item [(TC3)] Does a face/hole of one region have a common interior point with a face/hole of another region?
\end{itemize}
Recall that each face/hole (the envelope, resp.) of a bounded semi-algebraic region is a simple region with holes (composite region, resp.) (cf. Proposition~\ref{prop:envelope-is-cr}). We then show these specific topological conditions between simple regions with holes and/or composite regions can be further reduced to the RCC8 relations between their atoms (Lemmas~\ref{lemma:crs-inclusion}-\ref{lemma:srhs-intersection}). In this way, we will derive the nine-intersections of $A$ and $A'$ by using the RCC8 relations between their atoms, and hence prove Theorem~\ref{thm:complete}.

We begin with the interior-interior intersection.
\begin{lemma}\label{lemma:oo}
$A^\circ\cap{A^\prime}^\circ\not=\varnothing$\ $\mathrm{iff}$\
there exist $a\in\face(A)$ and $a'\in\face(A')$ such that $a^\circ\cap a'^\circ\not=\varnothing$.
\end{lemma}
\begin{proof}
This is because the interior of a region is the union of the interiors of all its faces.
\end{proof}

We next consider the interior-boundary intersection. The case of boundary-interior intersection is similar.

\begin{lemma}\label{lemma:ob}
$A^\circ\cap\partial A^\prime\not=\varnothing$\ $\mathrm{iff}$\
 there exist $a\in\face(A)$, $a'\in\face(A')$, $b'\in\hole(A')$ such that $a^\circ\cap a'^\circ\not=\varnothing$, and $a\not\subseteq \widehat{A'}$ or $a^\circ\cap b'^\circ\not=\varnothing$.
\end{lemma}
\begin{proof}
Write $b_0'$ for the unbounded exterior component of $A'$. Recall that $a\not\subseteq \widehat{A'}$ iff $a^\circ\cap b_0'\not=\varnothing$ for $b_0'$ being the set complement of $\widehat{A'}$.

On one hand, suppose $A^\circ\cap\partial {A^\prime}\not=\varnothing$. There exists $a\in\face(A)$ such that $a^\circ\cap\partial A^\prime\not=\varnothing$. Take $P\in a^\circ\cap\partial A^\prime$. Because $P$ is a boundary point of $A'$ and $a^\circ$ is a neighborhood of $P$, we know $a^\circ$ contains an interior point of $A'$ as well as an exterior point of $A'$. This implies that
there exist $a^\prime\in\face(A')$ and $b^\prime\in\hole(A')$ such that $a^\circ\cap {a^\prime}^\circ\not=\varnothing$, and $a^\circ\cap b_0'\not=\varnothing$ or $a^\circ\cap {b^\prime}^\circ\not=\varnothing$. 

On the other hand, suppose $A^\circ\cap\partial {A^\prime}=\varnothing$. For any $a\in\face(A)$, we have $a^\circ\cap\partial{A^\prime}=\varnothing$. This implies that the connected open set $a^\circ$ is contained in the union of the interior and the exterior of $A^\prime$. This is possible iff $a^\circ$ is contained in a single connected component of either the interior of $A^\prime$ or the exterior of $A^\prime$. 
\end{proof}

We next consider the interior-exterior intersection. The case of exterior-interior intersection is similar.
\begin{lemma}\label{lemma:oe}
$A^\circ\cap {A^\prime}^e\not=\varnothing$\ $\mathrm{iff}$\
there exist $a\in\face(A)$, $b'\in\hole(A')$ such that $a\not\subseteq \widehat{A'}$ or $a^\circ \cap {b'}^\circ\not=\varnothing$.
\end{lemma}
\begin{proof}
Write $b_0'$ for the unbounded exterior component of $A'$.  Recall that $a\not\subseteq \widehat{A'}$ iff $a^\circ\cap b_0'\not=\varnothing$.
The statement holds because the interior of $A$ is the union of the interiors of all its faces, and the exterior of $A'$ is the union of $b_0'$ and all the interiors of $A'$'s holes.
\end{proof}

Lastly we consider the boundary-exterior intersection. The case of exterior-boundary intersection is similar.
\begin{lemma} \label{lemma:partial;e}
$\partial A\cap {A^\prime}^e\not=\varnothing$\ $\mathrm{iff}$\ $\widehat{A}\not\subseteq \widehat{A'}$ or there exist $a\in\face(A)$, $b\in\hole(A)$, and $b'\in\hole(A')$ such that $a^\circ\cap b'^\circ\not=\varnothing$, and $b'\not\subseteq\widehat{A}$ or $b^\circ\cap b'^\circ\not=\varnothing$.
\end{lemma}
\begin{proof}
Write $b_0'$ for the unbounded exterior component of $A'$. It is easy to see that $\partial A\cap {A^\prime}^e\not=\varnothing$ iff $\partial A\cap b_0'\not=\varnothing$ or $\partial A\cap b'^\circ\not=\varnothing$ for some hole $b'$ of $A'$. Note that $\widehat{A'}$, the envelope of $A'$, is equal to the set complement of $b_0'$. We know $\partial A\cap b_0'\not=\varnothing$ iff $\partial A\not\subseteq \widehat{A'}$. We next show that this is also equivalent  to saying $\widehat{A}\not\subseteq \widehat{A'}$. Suppose $\widehat{A}\not\subseteq\widehat{A'}$ but $\partial A\subseteq \widehat{A'}$. Then we have $\widehat{A}^\circ\not\subseteq \widehat{A'}^\circ$, \emph{i.e.} there exists $P$ in the interior of $\widehat{A}$ but not in the interior of  $\widehat{A'}$. Since $\widehat{A}$ and $\widehat{A'}$ are both bounded, there exists $Q$ that is in the exterior of both $\widehat{A}$ and $\widehat{A'}$. Because $\widehat{A'}$ has a connected exterior, we know there is a path $\alpha$ contained in the exterior of $\widehat{A'}$ that connects $P$ to $Q$. Because $P$ is in the interior of $\widehat{A}$ and $Q$ is in the exterior of $\widehat{A}$, there is a boundary point of $\widehat{A}$ on the path $\alpha$. This, however, contradicts the assumption that all boundary points of $A$ are contained in $\widehat{A'}$. Therefore, $\partial A\cap b_0'\not=\varnothing$ iff $\widehat{A}$ is contained in $\widehat{A'}$.

By an argument similar to that in the proof of Lemma~\ref{lemma:ob}, we can show that ``$\partial A\cap b'^\circ\not=\varnothing$" for a hole $b'$ of $A'$ is equivalent to saying ``there exist $a\in\face(A)$, $b\in\hole(A)$ such that $a^\circ\cap b'^\circ\not=\varnothing$, and $b\not\subseteq \widehat{A'}$ or $b^\circ\cap b'^\circ\not=\varnothing$".
\end{proof}



By the above lemmas, we know that the seven intersections other than boundary-boundary and exterior-exterior can be reduced to checking the three topological conditions (TC1)-(TC3) listed just before Lemma~\ref{lemma:oo}. Recall by Proposition~\ref{prop:envelope-is-cr} that each face/hole (the envelope) of a bounded semi-algebraic region is a simple region with holes (a composite region). These conditions can be rephrased as checking 
\begin{itemize}
\item [(TC1$'$)] Is a composite region a subset of  another composite region?
\item [(TC2$'$)] Is a  simple region with holes a subset of a composite region?
\item [(TC3$'$)] Do two simple regions with holes have a common interior point? 
\end{itemize}

We next show how these topological conditions can be derived from the RCC8 relations between their atoms. We begin with (TC1$'$).

\begin{lemma}\label{lemma:crs-inclusion}
Let $D$ and $D'$ be two composite regions with atoms  $d_1,\cdots,d_k$ and, respectively, $d_1',\cdots,d_l'$.
Then $D\subseteq D'$ iff each $d_i$ ($1\leq i\leq k$) is a subset of some $d_j'$ ($1\leq j\leq l$).
\end{lemma}
\begin{proof}
The `if' part is clear. For the `only if' part, suppose $D\subseteq D'$. Then each $d_i$ is contained in $D'$. As $d_i^\circ$ is a connected open set, it must be contained in one connected component of $D'^\circ$. That is, $d_i^\circ$ is a subset of some $d_j'^\circ$, which is possible iff $d_i\subseteq d_j'$.   
\end{proof}
This lemma shows that, if $D$ and $D'$ are two composite regions, then $D\subseteq D'$ iff for any $1\leq i\leq k$ there exists $1\leq j\leq l$ such that $\eq(d_i,d_j') \vee \tpp(d_i,d_j') \vee \ntpp(d_i,d_j')$ holds. We next consider (TC2$'$).

\begin{lemma}\label{lemma:srh-contained-in-cr}
Let $D=(d_0;d_1,\cdots,d_k)$ be a simple region with holes and $D'$ be a composite region with atoms $d_1^\prime,d_2^\prime,\cdots,d_l^\prime$.
Then $D\subseteq D'$ iff $d_0$ is contained in some $d_i^\prime$ $(1\leq i\leq l)$.
\end{lemma}
\begin{proof}
Suppose $D\subseteq D'$. As a connected set $D^\circ$ is contained in a connected component of $D'^\circ$. That is, $D^\circ$ is contained in the interior of some $d_i^\prime$. Therefore, $D$ itself is contained in $d_i^\prime$. It is easy to see that any hole of $D$ is also contained in $d_i^\prime$. As a consequence, we know $d_0\subseteq d_i'$. On the other hand, suppose $d_0$ is contained in some $d_i^\prime$. We know $D\subseteq d_i^\prime\subseteq D'$ because $D$ is contained in $d_0$.
\end{proof}

This lemma shows that if $D$ is a simple region with holes and $D'$ is a composite region, then $D\subseteq D'$ iff  $\eq(d_0,d_j')\vee \tpp(d_0,d_j') \vee \ntpp(d_0,d_j')$ holds for some $1\leq j\leq l$.
As for (TC3$'$), we have
\begin{lemma}\label{lemma:srhs-intersection}
Let $D=(d_0;d_1,\cdots,d_k)$ and
$D^\prime=(d_0^\prime;d_1^\prime,\cdots,d_l^\prime)$ be two simple regions with holes.
Then $D^\circ\cap {D^\prime}^\circ=\varnothing$ iff  $d_0^\circ\cap
{d_0^\prime}^\circ=\varnothing$, or $d_0\subseteq d_i^\prime$ for
some $i$, or $d_0^\prime\subseteq d_j$ for some $j$.
\end{lemma}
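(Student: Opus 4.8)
The plan is to first turn the statement into a purely set-theoretic condition on the holes. Writing out the definition of a simple region with holes (Definition~\ref{dfn:srh}), one checks that $D^\circ = d_0^\circ \setminus \bigcup_{i=1}^k d_i$ and ${D'}^\circ = {d_0'}^\circ \setminus \bigcup_{j=1}^l d_j'$; the crucial point here is that each hole $d_i$ satisfies $d_i^\circ \subseteq d_0^\circ$ and meets $\partial d_0$ in at most one point, so that $\partial d_0 \subseteq D$. Consequently $D^\circ\cap {D'}^\circ = (d_0^\circ\cap {d_0'}^\circ)\setminus(\bigcup_i d_i\cup\bigcup_j d_j')$, and the ``if'' direction is then immediate: if $d_0^\circ\cap {d_0'}^\circ=\varnothing$ the whole set is empty; if $d_0\subseteq d_i'$ then $D^\circ\subseteq d_i'$ while ${D'}^\circ$ avoids the removed hole $d_i'$; and the case $d_0'\subseteq d_j$ is symmetric.

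For the converse I would argue by contraposition: assuming all three listed conditions fail, I would produce a point in $D^\circ\cap {D'}^\circ$. The key tool is an ``escape lemma'': a nonempty connected open planar set contained in a finite union of closed sets that pairwise meet in at most a single point must lie inside one of them. This follows because deleting the finitely many pairwise-intersection points keeps the open set connected, after which it is covered by mutually separated pieces and hence sits in just one. Since the holes of $D$, the holes of $D'$, and the generalized regions are all simple regions meeting pairwise in at most one point, this lemma applies repeatedly. First, applying it to the connected set $D^\circ$ and the family $\{d_1',\dots,d_l',\overline{{d_0'}^{e}}\}$ — whose union contains $D^\circ$ once $D^\circ\cap {D'}^\circ=\varnothing$ is assumed — forces either $D^\circ\subseteq d_{j_0}'$ for some $j_0$, or $D^\circ\subseteq \overline{{d_0'}^{e}}$, i.e. $D^\circ\cap {d_0'}^\circ=\varnothing$.

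In the first case a Jordan curve argument upgrades $D^\circ\subseteq d_{j_0}'$ to $d_0\subseteq d_{j_0}'$: since $\partial d_0\subseteq \overline{D^\circ}\subseteq d_{j_0}'$, the connected unbounded set ${d_{j_0}'}^{e}$ avoids the Jordan curve $\partial d_0$ and therefore lies in its unbounded complementary region $d_0^e$, giving $d_0\subseteq d_{j_0}'$ — condition~2. In the second case I would show that $D^\circ\cap {d_0'}^\circ=\varnothing$ together with $d_0^\circ\cap {d_0'}^\circ\neq\varnothing$ forces condition~3: a path inside the connected set ${d_0'}^\circ$ from a point of $d_0^\circ$ to a point outside $d_0$ (if one existed) would cross $\partial d_0\subseteq \overline{D^\circ}$ and thus put a point of $D^\circ$ in ${d_0'}^\circ$, a contradiction; hence ${d_0'}^\circ\subseteq d_0$, so ${d_0'}^\circ\subseteq \partial d_0\cup\bigcup_i d_i$, and as ${d_0'}^\circ$ is open and $\partial d_0$ is nowhere dense, ${d_0'}^\circ\subseteq\bigcup_i d_i$; the escape lemma then yields ${d_0'}^\circ\subseteq d_j$ and so $d_0'\subseteq d_j$.

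I expect the main obstacle to be the escape lemma together with the accompanying boundary bookkeeping: one must track closures carefully (that $\partial d_0\subseteq \overline{D^\circ}$, that an open set cannot hide inside the nowhere-dense boundaries, and that ${d_0'}^{e}$ is the unbounded complementary region of $\partial d_0$), and one must verify at each application that the relevant family indeed consists of simple regions meeting pairwise in at most one point, which is exactly what conditions~(2)--(3) of Definition~\ref{dfn:srh} guarantee. The Jordan curve theorem is invoked only to convert the interior containments into containments of the full disks.
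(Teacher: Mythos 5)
The paper states Lemma~\ref{lemma:srhs-intersection} without any proof, so there is no ``paper's argument'' to compare against; what you have done is supply the missing proof, and as far as I can check it is correct. Your reduction $D^\circ=d_0^\circ\setminus\bigcup_i d_i$ is right (note that the step $x\in d_i\Rightarrow$ every neighbourhood of $x$ meets $d_i^\circ$ uses that each hole is a \emph{regular} closed set, i.e.\ a simple region, rather than the one-point boundary condition you cite), and the ``if'' direction is then immediate. For the converse, your escape lemma is sound: removing the finitely many pairwise intersection points leaves a connected open planar set covered by pairwise disjoint relatively closed sets, so it lies in one of them, and density of the punctured set recovers the full containment. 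The two applications are legitimate because $d_j'\cap\overline{{d_0'}^{e}}=d_j'\cap\partial d_0'=\partial d_j'\cap\partial d_0'$ (an interior point of a hole cannot lie on $\partial d_0'$ since $d_j'\subseteq d_0'$), which is at most a singleton by Definition~\ref{dfn:srh}(2), and the holes pairwise meet in at most a point by Definition~\ref{dfn:srh}(3). The connectedness of $D^\circ$, guaranteed by Definition~\ref{dfn:srh}(1), is exactly what makes the dichotomy $D^\circ\subseteq d_{j_0}'$ versus $D^\circ\cap{d_0'}^\circ=\varnothing$ work; the Jordan-curve upgrade from $\overline{D^\circ}\subseteq d_{j_0}'$ to $d_0\subseteq d_{j_0}'$ and the path-crossing argument via $\partial d_0\subseteq\overline{D^\circ}=D$ (which holds because $D$ is regular closed and $d_i^\circ\subseteq d_0^\circ$) are both standard and correctly deployed. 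In short: the proposal fills a genuine gap in the paper rather than duplicating an existing proof, and the only cosmetic improvement I would suggest is to attribute the identity $D^\circ=d_0^\circ\setminus\bigcup_i d_i$ and the inclusion $\partial d_0\subseteq D$ explicitly to the regular-closedness of the $d_i$ and of $D$, and to note that the second case of the dichotomy cannot occur vacuously once condition~1 of the lemma is assumed to fail.
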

\begin{proof}
The `if' part is clear. We prove the `only if' part. Let $b_0$ ($b_0'$) be the unbounded exterior component of $D$ ($D'$). Then $b_0$ ($b_0'$) is the set complement of $d_0$ ($d_0'$). 

Suppose $D^\circ\cap {D^\prime}^\circ=\varnothing$ and $d_0\not\subseteq d_i^\prime$ for
any $i$, and $d_0^\prime\not\subseteq d_j$ for any $j$. We show $d_0^\circ\cap {d_0^\prime}^\circ=\varnothing$. By  $D^\circ\cap {D^\prime}^\circ=\varnothing$ we know  $D^\circ\cap {D^\prime}=\varnothing$ is also true. This implies that $D^\circ$ is contained in the exterior of $D'$. Because $D^\circ$ is a connected open set, it must be contained in a connected component of the exterior of $D'$, which is either $d_j'^\circ$ $(1\leq j\leq l)$ or $b_0'$. 

If $D^\circ$ is contained in $d_j'^\circ$ for some $1\leq j\leq l$, then each hole of $D$ is also contained in $d_j'$. Therefore $d_0^\circ$ is contained in $d_j'^\circ$. This contradicts our assumption  that $d_0^\circ\not\subseteq d_j'^\circ$ for $1\leq j\leq l$.  So $D^\circ$ should be contained in $b_0'$. In this case we have $D^\circ\cap d_0'=\varnothing$, and hence $D\cap d_0'^\circ=\varnothing$. This implies that $d_0'^\circ$ is contained in a connected component of the exterior of $D$. By our assumption, $d_0'$ is not contained in any $d_i$ for $1\leq i\leq k$. Therefore, $d_0'^\circ$ is not contained in $d_i^\circ$ for $1\leq i\leq k$. So $d_0'^\circ$ must be contained in $b_0$. Hence $d_0'^\circ\cap d_0=\varnothing$ and $d_0^\circ\cap d_0'^\circ=\varnothing$ for $d_0$ being the set complement of $b_0$.
\end{proof}
The above lemma asserts that two simple regions with holes have no common interior points iff either $\dc(d_0,d_0') \vee \ec(d_0,d_0')$, or $\eq(d_0,d_i') \vee \tpp(d_0,d_i') \vee \ntpp(d_0,d_i')$ for some $1\leq i\leq l$, or $\eq(d_0',d_j) \vee \tpp(d_0',d_j) \vee \ntpp(d_0',d_j)$ for some $1\leq j\leq k$.

By the above three lemmas, we know the three topological conditions TC1-TC3 can be determined by the RCC8 relations between the atoms of the two regions. Applying these results to the intersections between $A,A'$ other than boundary-boundary and exterior-exterior, we can compute the intersections of $A$ and $A'$ from the matrix in Eq.~\ref{eq:9int-oo}. 

Take the interior-boundary intersection as an example. To determine if $A^\circ\cap\partial A'$ is nonempty, by Lemma~\ref{lemma:ob} we need only check if there exist $a\in\face(A)$, $a'\in\face(A')$, $b'\in\hole(A')$ such that $a^\circ\cap a'^\circ\not=\varnothing$, and $a\not\subseteq \widehat{A'}$ or $a^\circ\cap b'^\circ\not=\varnothing$. Choose any $a,a'$ and $b'$ as above. We check if they satisfy the above conditions. If the answer is affirmative, then we know $A^\circ\cap\partial A'$ is nonempty. If the answer is negative, we choose another triple of faces/holes, and continue this way until all possible triples have been checked. In that case we have $A^\circ\cap\partial A'=\varnothing$.  

We start by checking $a\not\subseteq \widehat{A'}$ for all $a\in\face(A)$. By Lemma~\ref{lemma:srh-contained-in-cr} we need only check if the envelope of $a$ is contained in an atom of $\widehat{A'}$. To this end, we need list all atoms of $\widehat{A'}$ and identify the envelope of $a$. Suppose $o_i$ is the envelope of $a$ and $o'_{j_1},o'_{j_2},\cdots,o'_{j_k}$ are atoms of $\widehat{A'}$. We conclude that $a\subseteq \widehat{A'}$ when $M(o_i,o'_{j_s})$ is either \eq, or \tpp, or \ntpp\ for some $1\leq s\leq k$; and  $a\subseteq \widehat{A'}$ otherwise. We then check if $a^\circ\cap a'^\circ\not=\varnothing$. Suppose the two faces are $a=(d_0;d_1,\cdots,d_k)$ and $a^\prime=(d_0^\prime;d_1^\prime,\cdots,d_l^\prime)$. By Lemma~\ref{lemma:srhs-intersection} we know 
$a^\circ\cap {a^\prime}^\circ=\varnothing$ iff  $d_0^\circ\cap {d_0^\prime}^\circ=\varnothing$, or $d_0\subseteq d_i^\prime$ for some $i$, or $d_0^\prime\subseteq d_j$ for some $j$.
Because all $d_i$ are atoms of $A$, and all $d_j'$ are atoms of $A'$, the equations $d_0^\circ\cap {d_0^\prime}^\circ=\varnothing$, $d_0\subseteq d_i^\prime$, and $d_0^\prime\subseteq d_j$ can be easily determined by using the matrix in Eq.~\ref{eq:9int-oo}. Whether $a^\circ\cap b'^\circ\not=\varnothing$ can be determined similarly.

Combined with Lemma~\ref{lemma:partial2}, we conclude that the nine intersections of two bounded semi-algebraic regions $A,A^\prime$ can be derived from the RCC8 relations between atoms of $A$ and $A^\prime$ given in Eq.~\ref{eq:9int-oo}. Thus we have proved Theorem~\ref{thm:complete}.

A special case of Theorem~\ref{thm:complete} has been investigated in \cite{McKenneyPS08} for simple regions with holes. 
\begin{corollary}\label{coro:srh-local}
Let $A=(d_0;d_1,\cdots,d_k)$ and $A^\prime=(d_0^\prime; d_1^\prime,\cdots,d_l^\prime)$ be two simple regions with holes. Then the 9-intersection relation between $A$ and $A^\prime$ is completely determined by the RCC8 relations between all $d_i$ and all $d_j^\prime$.
\end{corollary}

\subsection{The Necessary Theorem}

Theorem~\ref{thm:complete} shows that these atoms are sufficient to determine the 9-intersection relation between any two bounded semi-algebraic regions  $A$ and $A^\prime$. If $A'$ is a simple region, then we have in particular the following result.
\begin{proposition}\label{prop:suf4atoms}
Suppose $A$ is a bounded semi-algebraic region, and $d$ and $d^\prime$ are two simple regions. Then we have
\begin{equation}\label{eq:suf4atoms}
M(A,d)=M(A,d^\prime)  \ \mbox{if}\  M(o,d)=M(o,d^\prime)\ \mbox{holds\ for\ all}\ o\in\atom(A),
\end{equation}
where $M(X,Y)$ is the 9-intersection relation between $X$ and $Y$ (see Eq.~\ref{eq:9int}).
\end{proposition}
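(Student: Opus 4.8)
The plan is to obtain this proposition as a specialization of Theorem~\ref{thm:complete}, the key observation being that a simple region is its own unique atom. First I would record the atom set of a simple region: if $d$ is a simple region, then $d$ is homeomorphic to a closed disk, so it has exactly one positive component (namely $d$ itself), no hole component, and its generalized region satisfies $\widehat{d}=d$. Hence $\atom(d)=\{d\}$, and likewise $\atom(d^\prime)=\{d^\prime\}$. Consequently every cross 9-intersection relation between an atom of $A$ and an atom of $d$ (resp.\ $d^\prime$) is simply $M(o,d)$ (resp.\ $M(o,d^\prime)$) for some $o\in\atom(A)$.

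Next I would make explicit, by replaying the reduction behind Theorem~\ref{thm:complete}, that each of the nine entries of $M(A,d)$ is a Boolean function of exactly two kinds of data: (i) the internal structure of $A$, namely the decomposition of each bounded component into its generalized region and holes together with the nesting among these atoms, which is fixed with $A$ and independent of the second argument; and (ii) the cross-relations $M(o,d)$ for $o\in\atom(A)$. Indeed, Lemmas~\ref{lemma:partial2} and \ref{lemma:partial2-int} reduce the boundary-boundary entry to the tests $\partial o\cap\partial d=\varnothing$, which are read off from the boundary-boundary entries of the $M(o,d)$; and Lemmas~\ref{lemma:oo}--\ref{lemma:partial;e}, via Definition~\ref{dfn:i-degree}, reduce the remaining entries to the intersections of types I1 and I2, which by Lemmas~\ref{lemma:srhs-intersection} and \ref{lemma:bounded-unbounded-intersection} are in turn decided by interior-interior tests and containment tests among the atoms involved. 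Since $\Gamma^-(d)=\varnothing$ and $\widehat{d}$ consists of the single simple region $d$, all these tests pair an atom $o$ of $A$ with $d$ alone, and the containment conditions (such as $d\subseteq a_j$ or $a_0\subseteq d$) are recoverable from the relation $M(o,d)$.

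Finally I would conclude. The hypothesis $M(o,d)=M(o,d^\prime)$ for all $o\in\atom(A)$ equates every datum of type (ii) between $d$ and $d^\prime$; the data of type (i) are identical because $A$ is unchanged; and the structural contribution of the second argument coincides because $d$ and $d^\prime$ share the same trivial structure (one positive component equal to the generalized region, no holes). Hence every one of the nine entries of $M(A,d)$ agrees with the corresponding entry of $M(A,d^\prime)$, that is, $M(A,d)=M(A,d^\prime)$.

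The step I expect to be the main obstacle is the second one. Theorem~\ref{thm:complete} is stated as determination ``by the 9-intersection relations between atoms,'' so I must verify that the second region enters only through these cross-relations and its trivial atom structure, and not through any hidden absolute information. Concretely, the crux is to confirm that the containment conditions appearing in Lemmas~\ref{lemma:srhs-intersection} and \ref{lemma:bounded-unbounded-intersection} are genuinely functions of the cross-relations $M(o,d)$ rather than of the absolute geometry of the atoms; once this is pinned down, the remainder is bookkeeping.
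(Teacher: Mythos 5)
Your proposal is correct and follows essentially the same route as the paper, which presents this proposition as an immediate specialization of Theorem~\ref{thm:complete} (introduced only by ``In particular, we have'') with no further argument. Your extra care in checking that the second region enters the reduction of Lemmas~\ref{lemma:partial2}--\ref{lemma:bounded-unbounded-intersection} only through the cross-relations $M(o,d)$ and its trivial atom structure supplies exactly the detail the paper leaves implicit.
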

Are these atoms \emph{necessary}? That is,
do we need all these atoms of $A$ to compute the 9-intersection relation between $A$ and any other bounded semi-algebraic region $A^\prime$? What happens if we delete an atom $o$ from the atom set $\atom(A)$? In this subsection we
investigate these problems.

We begin with an example. Consider the region $A$ in Figure~\ref{fig:co1+}(a), which has two faces $a_1,a_2$ and one hole $b_1$. As usual, we write $b_0$ for the unbounded exterior component of $A$. Note that $A$ is neither a simple region with holes nor a composite region. The atom set of $A$ is $\atom(A)=\{a_1,a_2,b_1,\widehat{A}\}$, where $\widehat{A}=a_1\cup a_2\cup b_1$. We next show that all atoms of $A$ are necessary. To this end, we show, for each atom $d$ of $A$, there exists a simple region $d^\prime$ such that
\begin{equation}
\mbox{$M(A,d)\not=M(A,d^\prime)$ but $M(o,d)=M(o,d^\prime)$ for all
$A$-atoms $o\not=d$.}
\end{equation}

Consider the hole $b_1$. Let $b^\prime$ be the simple region as
shown in Figure~\ref{fig:co1+}(b). Then $b^\prime$ has the same RCC8
relations with $a_1,a_2$ and $\widehat{A}$ as $b_1$ does. Note that $\partial b_1\subset\partial
A$ while $\partial b^\prime\not\subseteq\partial A$. This implies that $ A^\circ\cap \partial b_1=\varnothing$ and $ A^e\cap \partial b_1=\varnothing$ but either $ A^\circ\cap \partial b'\not=\varnothing$ or $A^e\cap \partial b'\not=\varnothing$. Therefore, $M(A,b_1)\not=M(A,b^\prime)$.

Take $\widehat{A}$ as another example. Let  $b^\ast$ be the simple region that contains $\widehat{A}$ as shown in Figure~\ref{fig:co1+}(c). Then $\widehat{A}$ and $b^\ast$ have the same 9-intersection
relations with $a_1,a_2$ and $b_1$. But since $\partial
\widehat{A}\subset\partial A$ and $\partial b^\ast\not\subseteq\partial A$, we
know $A^\circ\cap\partial\widehat{A}=\varnothing$ and $A^e\cap\partial\widehat{A}=\varnothing$ but either $A^\circ\cap\partial b^\ast\not=\varnothing$ or $A^e\cap\partial b^\ast\not=\varnothing$. Therefore,  $M(\widehat{A},A)\not=M(b^\ast,A)$.

Similar construction can be applied to $a_1$ and $a_2$.

\begin{figure}[h]
\centering
\begin{tabular}{ccc}
\includegraphics[width=.3\textwidth]{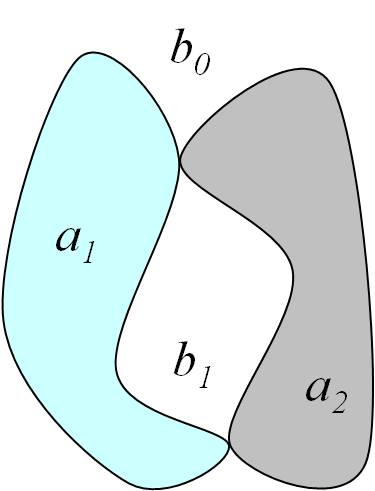}
&\includegraphics[width=.35\textwidth]{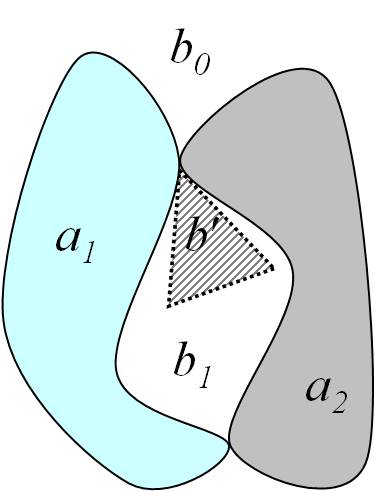} &
\includegraphics[width=.3\textwidth]{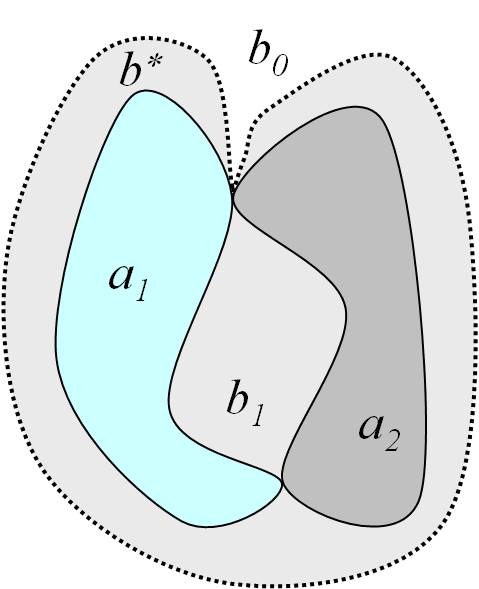}\\
(a) & (b) & (c)
\end{tabular}
\caption{A bounded region and its atoms.} \label{fig:co1+}
\end{figure}

The following lemma shows that the above conclusion holds for any bounded semi-algebraic region.
\begin{lemma}\label{thm:local-nece}
Suppose $A$ is a bounded semi-algebraic region and $d$ is an atom of $A$. Then there exists another simple region $d^\prime$ such that
$M(A,d)\not=M(A,d^\prime)$ but $M(o,d)=M(o,d^\prime)$ for each
atom $o\not=d$ of $A$.
\end{lemma}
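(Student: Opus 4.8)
The plan is to construct $d^\prime$ as a small boundary perturbation of $d$, using that every atom has its boundary inside $\partial A$. First I would record the inclusion $\partial d\subseteq\partial A$. If $d$ is the generalized region or a hole of a bounded component $c$ of $A$, then $\partial d\subseteq\partial c\subseteq\partial A$, since $\partial A=\bigcup\{\partial a:a\in\Gamma^+(A)\}$ (Lemma~\ref{lemma:partial2}) and the boundary of a hole component separates it from $A$ and so also lies in $\partial A$; if $d$ is a positive component of $\widehat A$, then $\partial d\subseteq\partial\widehat A\subseteq\partial A$, the last inclusion being a consequence of Proposition~\ref{prop:generalized-region}. Hence in $M(A,d)$ the two cells $A^\circ\cap\partial d$ and $A^e\cap\partial d$ are empty, while $\partial A\cap\partial d=\partial d\ne\varnothing$. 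The strategy is to move a short subarc of $\partial d$ just off $\partial A$ so that exactly one of these two cells becomes nonempty; this alone forces $M(A,d)\ne M(A,d^\prime)$.

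For the construction I would fix a regular point $x_0$ of the semi-algebraic Jordan curve $\partial d$, avoiding its finitely many singular points and the finitely many points where $\partial d$ meets another atom only in a singleton. Near $x_0$ the curve splits a small disk into the $d^\circ$-side and the $d^e$-side, and I would replace $\partial d$ on a short subarc around $x_0$ by a small bump or dent, the direction depending on the type of $d$: a positive component of $\widehat A$ is grown outward into the unbounded exterior $b_0^\circ\subseteq A^e$ (as with $b^\ast\supseteq\widehat A$ in Figure~\ref{fig:co1+}), whereas a generalized region or a hole of a bounded component is dented inward (as with the hole modification $b^\prime$), landing in $A^\circ$ for a positive generalized region and in $A^e$ for a hole. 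In every case $d^\prime$ is again homeomorphic to a closed disk, hence a simple region, and the moved subarc now lies strictly inside $A^\circ$ or $A^e$, producing the claimed cell flip in $M(A,d)$.

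It remains to choose things so that no relation with another atom changes. Here I would use that, for two simple regions, $M(o,d)$ records only the RCC8 relation, i.e. only which of the nine cells are empty, so it is preserved under any deformation that does not flip an emptiness. Moving only a short subarc, every externally-connected or tangential contact of $d$ with another atom survives along the unperturbed part of $\partial d$, and every containment relation is retained because $d^\prime$ stays nested in exactly the same atoms as $d$. The designated direction is chosen precisely so that the swept region is \emph{free}: when a positive component of $\widehat A$ grows into $b_0^\circ$ it enters no atom interior, since no atom meets the unbounded exterior and no atom properly contains it; and when a generalized region or hole is dented inward it only uncovers filler that is interior to no atom other than $d$ itself — for $\widehat p$ the uncovered material is $p^\circ$, and $p$ is an atom only in the already-admissible case $\widehat p=p$, while holes of $p$ and islands lie non-tangentially deep inside, away from the short dent.

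I expect the main obstacle to be turning these local pictures into a rigorous, uniform argument, i.e. proving that the designated direction is genuinely free at some regular point for \emph{every} atom. This rests on the laminar structure of the atom set — that any two atoms are disjoint, meet in a single point, or are nested, and that no two atoms are in the relation PO — which must first be established, in particular ruling out that a contained atom touches $\partial d$ tangentially along the whole of the chosen subarc or that growth immediately crosses into a disjoint atom. Once laminarity is in place, the remaining work is the bookkeeping that the subarc can be taken short enough to preserve every surviving contact and to avoid the finitely many points of singleton contact; granting this, the cell-flip comparison distinguishing $M(A,d)$ from $M(A,d^\prime)$ is immediate.
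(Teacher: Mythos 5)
Your proposal is essentially the paper's own argument: both exploit $\partial d\subseteq\partial A$ and construct $d^\prime$ by a small local perturbation of $\partial d$ at a point chosen away from the (finitely many) designated contact points with the other atoms, so that some boundary point of $d^\prime$ leaves $\partial A$ (flipping a cell of $M(A,d)$) while every boundary contact and every part--whole relation with the other atoms is preserved. The freeness/laminarity issue you flag as the remaining obstacle is likewise left implicit in the paper, which simply asserts the existence of a suitable arc of $\partial d$ disjoint from the holes of the relevant component and that the part--whole relations are unchanged.
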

\begin{proof}
Note that $d\in\atom(A)$ implies $\partial d\subseteq\partial A$. We construct a simple region $d^\prime$ that is similar to $d$ but
$\partial d^\prime\not\subseteq\partial A$. Note that because $A^\circ\cap\partial d=\varnothing$ and $A^e\cap\partial d=\varnothing$ but either $A^\circ\cap\partial d'\not=\varnothing$ or $A^e\cap\partial d'\not=\varnothing$, we will have $M(A,d)\not=M(A,d^\prime)$.

We take the case when $d$ is the envelope of a face/hole $c$ as an example. Without lack of generality, we assume that $c$ itself has at least one hole.
For each $o\in\atom(A)$, take a point
$P_o$ in $\partial d\cap\partial o$ if it is nonempty. There exists a Jordan arc $\alpha$ contained in the Jordan
curve $\partial d$ that is disjoint from all holes of $c$. Take
a point $P\in\alpha$ such that $P\not=P_o$ for all $o$. So the
distance from $P$ to the holes of $c$, denoted by $\delta$, is nonzero. Write $B^\circ(P,\varepsilon)$ be the open disk centred at $P$ with radius $\varepsilon\ll \delta$, and let $d'$ for $d\setminus B^\circ(P,\varepsilon)$. Then $d'$ is still
simply connected and contains all holes of $d$. Clearly, there is a point $Q$ in
the boundary of $d^\prime$ that is not on $\partial A$. This implies that $M(A,d)\not=M(A,d^\prime)$. On the other hand, for
each atom $o\not=d$ of $A$, if $\partial o\cap\partial d$ is
empty, so is $\partial o\cap\partial d^\prime$; and if $\partial
o\cap\partial d$ is nonempty, then $P_o\in\partial
o\cap\partial d^\prime\not=\varnothing$. Moreover, for each $o\in\atom(A)$ with $o\not=d$, the part-whole
relation between $d^\prime$ and $o$ is the same as that
between $d$ and $o$. Therefore, $M(o,d)=M(o,d^\prime)$ for
all atoms $o\not=d$ of $A$.

The case when $d$ is a hole of a face/hole $c$ or a face of $\widehat{A}$ is similar.
\end{proof}

As a consequence, we know that each atom in $\atom(A)$ is necessary.

\begin{theorem}\label{thm:nece}
Each atom of a bounded semi-algebraic region $A$ is necessary in locally determining the 9-intersection relation of $A$ and other bounded semi-algebraic regions.
\end{theorem}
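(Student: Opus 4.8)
The plan is to derive this theorem as an immediate consequence of Lemma~\ref{thm:local-nece}, so nearly all of the work has already been done; what remains is to pin down the formal meaning of ``necessary in locally determining'' and to confirm that the witness regions supplied by the lemma lie in the intended quantification range. First I would fix the reading of the claim. To say that an atom $d$ is \emph{necessary} is to say that the reduced set $\atom(A)\setminus\{d\}$ fails to determine $M(A,\cdot)$; that is, there exist two bounded semi-algebraic regions $X_1,X_2$ with $M(o,X_1)=M(o,X_2)$ for every $o\in\atom(A)\setminus\{d\}$, yet $M(A,X_1)\neq M(A,X_2)$. Under this reading the claim is precisely the negation of the sufficiency statement restricted to the smaller atom set, and the contrast with Proposition~\ref{prop:suf4atoms} (which shows that the \emph{full} atom set is sufficient) is exactly what justifies calling the deleted atom ``necessary.''

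Next I would invoke Lemma~\ref{thm:local-nece}. Fix an arbitrary atom $d$ of $A$. The lemma produces a simple region $d^\prime$ satisfying $M(o,d)=M(o,d^\prime)$ for all atoms $o\neq d$ while $M(A,d)\neq M(A,d^\prime)$. Setting $X_1=d$ and $X_2=d^\prime$ yields exactly the pair required above: $d$ is itself a simple region (being an atom of $A$) and $d^\prime$ is a simple region by construction, so both are bounded plane regions; they agree on every atom of $\atom(A)\setminus\{d\}$ but differ in their relation with $A$. Hence $\atom(A)\setminus\{d\}$ cannot determine $M(A,\cdot)$, so $d$ is necessary, and since $d$ was arbitrary, every atom of $A$ is necessary.

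The one point I would verify carefully — and the only place where anything beyond quoting the lemma is needed — is that $d^\prime$ genuinely qualifies as a \emph{bounded semi-algebraic} region, since the theorem quantifies over exactly such regions. The construction in Lemma~\ref{thm:local-nece} obtains $d^\prime$ from $d$ by excising a ``tiny part'' around a boundary point $P$, and a priori an arbitrary excision need not be semi-algebraic. I would therefore arrange the excised piece to be semi-algebraic, for instance by removing the intersection of $d$ with a sufficiently small open disk or half-plane centred near $P$; then $d^\prime$, being a Boolean combination of semi-algebraic sets, is again semi-algebraic, and it is still a simple region disjoint from the holes of $d$. Since $d^\prime$ is also bounded, it falls within the intended range of the quantifier, and no further argument is required.
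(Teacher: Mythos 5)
Your proof is correct and follows essentially the same route as the paper, which likewise derives Theorem~\ref{thm:nece} as an immediate consequence of Lemma~\ref{thm:local-nece} by taking the pair $d,d^\prime$ as the witnesses that $\atom(A)\setminus\{d\}$ fails to determine $M(A,\cdot)$. Your extra check that the excised region $d^\prime$ can be arranged to be semi-algebraic (e.g.\ by removing the intersection with a small disk) is a sensible refinement of a point the paper leaves implicit, but it does not change the argument.
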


\section{A Graph Representation of Plane Regions}

In this section we first introduce  for each bounded semi-algebraic region $A$ a unique link graph to represent the internal structure of $A$, and then give methods for computing the atoms of $A$.



\subsection{Link Graph}
We recall that a \emph{Jordan arc} is a plane subset that is homeomorphic to the closed interval
$[0,1]$. The boundary of each simple region contains a Jordan arc.
\begin{definition}\label{dfn:linked-region}
Two open sets $U,V$ in $\mathbb{R}^2$ are \emph{linked} if $U\cap V$ is empty and $\partial U\cap \partial V$ contains a Jordan arc.
\end{definition}

\begin{lemma}
\label{lemma:union}
Suppose $A,B$ are two bounded semi-algebraic regions with a connected interior. If $A^\circ$ and $B^\circ$ are linked, then $A\cup B$ has a connected interior.
\end{lemma}
\begin{proof}
Suppose $A^\circ$ and $B^\circ$ are linked by a Jordan arc $\varphi$ contained in $\partial A\cap \partial B$. Let $P_1,P_2$ be the two endpoints of $\varphi$. Take a point $P$ in $\varphi$ different from $P_1,P_2$. Because $\psi=\{P_1,P_2\}\cup (\partial A\setminus{\varphi})$ is a closed set that does not contain $P$, we know $d(P,\psi)>0$. Take $\delta<d(P,\psi)$. Then we can show $B^\circ(P,\delta)$ is contained in $A\cup B$. Since $B^\circ(P,\delta)$ contains an interior point of $A$ and an interior point of $B$, we know  $A\cup B$ has a connected interior.
\end{proof}

By Proposition~\ref{prop:srh-con-interior} we know a bounded semi-algebraic region $X$ is a simple region with holes iff $X^\circ$ is connected. Therefore, we have
\begin{corollary}
\label{coro: x}
Suppose $A,B$ are two simple regions with holes, if $A^\circ$ and $B^\circ$ are linked, then $A\cup B$ is a simple region with holes.
\end{corollary}

Recall that a component of a plane region $A$ is a connected component of either the interior or the exterior of $A$.
\begin{proposition}
Suppose $A$ is a bounded semi-algebraic region. Then each component of $A$ is linked to at least one other component of $A$. Moreover, if two components are linked, then exactly one is an interior component of $A$.
\end{proposition}
\begin{proof}
Because the boundary of each component $c$ is contained in the union of the boundaries of
all other components, we know that $c$ is linked to at least one other component. Suppose $c_1,c_2$ are two linked components. By Lemma~\ref{lemma:union} we know that $(\overline{c_1}\cup \overline{c_2})^\circ$ is connected. If both $c_1$ and $c_2$ are interior components, then $c_1\cup c_2 \subseteq (\overline{c_1}\cup\overline{c_2})^\circ$ is contained in $A^\circ$, which contradicts the assumption that $c_1$ is a connected component of $A^\circ$. Therefore, at most one of $c_1$ and $c_2$ is an interior component. Similarly, we can show at most one of $c_1$ and $c_2$ is an exterior component.
\end{proof}

For a bounded semi-algebraic region $A$, we introduce a level function, which classifies the
components of $A$ into levels.
\begin{definition}
Let $A$ be a bounded semi-algebraic region. For each component $c$ of $A$, we define
$\lev(c)$, the level of component $c$, inductively as follows:
\begin{itemize}
\item The level of $b_0$, the unbounded exterior component of $A$, is 0;
\item For an undefined $c^\ast$, if there exists a previously defined node
$c$ which is linked to $c^\ast$, then define
$\lev(c^\ast)=\lev(c)+1$.
\end{itemize}
That is, $\lev(c)$ is the minimum length of a sequence $c=c_n,c_{n-1},\cdots,b_0$ whose successive elements are linked.
\end{definition}
For two linked components $c_1,c_2$, it is easy to prove that
$\lev(c_1)-\lev(c_2)=\pm 1$. Take the bounded semi-algebraic region $C$ in Figure~\ref{fig:3comb} as an example. We have
$\lev(b_0)=0$, $\lev(a_1)=\lev(a_2)=1$, and $\lev(b_1)=2$.

\begin{definition}\label{dfn:linkgraph}
The \emph{link graph} $\mathcal{G}_A$ of a bounded semi-algebraic region $A$ is defined as
a directed graph $(N(A),E(A))$ as follows:
\begin{itemize}
\item $N(A)$ is the set of all connected components of $A^\circ$ and $A^e$;
\item For $c_1,c_2\in N(A)$, $(c_1,c_2)\in E(A)$ if they are linked
and $\lev(c_2)=\lev(c_1)+1$.
\end{itemize}
In other words, the link graph is naturally bipartite. Because $b_0$ has level 0 in the layered graph $\mathcal{G}_A$, we call $b_0$ the \emph{root} of $\mathcal{G}_A$. If there is a directed edge from $c_1$ to $c_2$, then we call $c_1$ a \emph{parent} of $c_2$, and call $c_2$ a \emph{child} of $c_1$. 
\end{definition}

In this way, we associate each bounded semi-algebraic region with a unique layered graph. Figure~\ref{fig:graph-b} shows the link graphs of regions $A,B,C$ in Figure~\ref{fig:3comb}. 



\begin{figure}
\centering
\begin{tabular}{ccc}
\includegraphics[width=.065\textwidth]{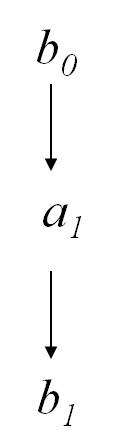}
&
\includegraphics[width=.2\textwidth]{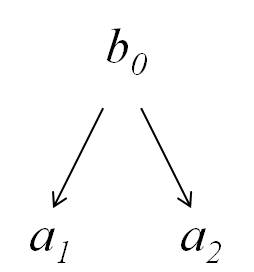}
&
\includegraphics[width=.2\textwidth]{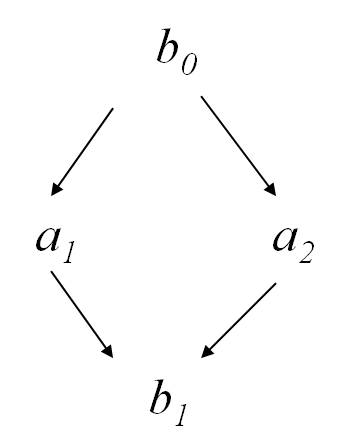}\\
(a) & (b) & (c)
\end{tabular}
\caption{The link graphs of regions in Figure~\ref{fig:3comb}.}
\label{fig:graph-b}
\end{figure}

In the next proposition, we give characterizations of simple regions with holes and
composite regions in terms of their link graphs.
\begin{proposition}
Let $A$ be a bounded semi-algebraic region. Then $A$ is a simple region with holes
iff its link graph $\mathcal{G}_A$ has a unique node at level 1 and
has no node with level greater than 2; and $A$ is a composite region iff its link graph $\mathcal{G}_A$ has no node with level greater than 1.
\end{proposition}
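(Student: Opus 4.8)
The plan is to derive both equivalences from the connectedness criterion of Proposition~\ref{prop:bounded-components-are-srhs}, translated through a single structural fact about how $\lev$ interacts with the sign of a component. Recall that a bounded semi-algebraic region is a composite region exactly when its exterior is connected, and a simple region with holes exactly when its interior is connected. Since $A^e$ is the set complement of $A$, whose connected components are the interior of $b_0$ together with the interiors of the holes, ``$A^e$ is connected'' is equivalent to ``$A$ has no hole component''; and since the positive components are precisely the closures of the connected components of $A^\circ$, ``$A^\circ$ is connected'' is equivalent to ``$A$ has exactly one positive component''. Both target properties thus become statements about holes and positive components, which I will read off from the levels.

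The key lemma I would isolate is a parity statement: in $\mathcal{L}_A$ every positive component lies at an odd level and every negative component (that is, $b_0$ together with the holes) lies at an even level, with $b_0$ the unique node at level $0$. This follows from the two facts recorded just before the definition of the link graph: along any edge exactly one endpoint is positive, so the sign alternates at each step, while $\lev$ changes by exactly $\pm1$ between linked components. As $\lev(b_0)=0$ and $b_0$ is negative, induction along a path from the root shows that the level of a component has the same parity as its sign-type, and the inductive clause assigns a strictly positive level to every node other than $b_0$.

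Granting the parity lemma, the composite case is short. If $A$ is composite then its exterior is connected, so $A$ has no holes; then $b_0$ is the only negative node, every positive component is forced to link to $b_0$ (its sole possible negative partner) and hence sits at level $1$, so $\mathcal{L}_A$ has no node above level $1$. Conversely, suppose no node exceeds level $1$, so there is no level-$2$ node. Were $A$ to possess a hole, a hole $h$ of minimal level would sit at some even level $\geq 2$; tracing $h$ to its positive parent (level one lower, hence odd) and then to that parent's own parent (even, hence either $b_0$ or a hole of strictly smaller level) rules out the latter by minimality and forces $h$ to level $2$, contradicting the absence of a level-$2$ node. Hence $A$ has no holes, its exterior is connected, and $A$ is composite by Proposition~\ref{prop:bounded-components-are-srhs}.

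The simple-region-with-holes case proceeds symmetrically, now tracking positive components. If $A$ is a simple region with holes then $A^\circ$ is connected, so there is a unique positive component; being the only positive node it must sit at level $1$, and no node can exceed level $2$, since any such node would, on tracing parents, yield a second positive component at an odd level $\geq 3$. This gives a unique node at level $1$ and no node above level $2$. Conversely, ``unique node at level $1$'' together with ``no node above level $2$'' confines the odd levels to $\{1\}$, so by parity the positive components reduce to the single level-$1$ node; then $A^\circ$ is connected and $A$ is a simple region with holes. The step I expect to be most delicate is the parity lemma itself: one must ensure that $\lev$ is actually defined on every component---equivalently that $\mathcal{L}_A$ is connected, which the inductive definition presupposes but does not itself establish---and that the sign/level parity is consistent across all paths reaching a node, not merely along the one used to define its level.
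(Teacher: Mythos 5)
Your argument is correct. Note first that the paper states this proposition without any proof, so there is nothing to compare against; you are filling a genuine gap rather than diverging from the authors' route. Your strategy---reducing both characterizations to the connectedness criterion of Proposition~\ref{prop:bounded-components-are-srhs} (composite $\Leftrightarrow$ no hole components, simple region with holes $\Leftrightarrow$ a unique positive component) and then reading these off from a sign/level parity lemma---is exactly the kind of argument the paper's surrounding remarks suggest but never write down. The parity lemma itself is sound: linked components have opposite sign (two components of the same sign meet in a finite set, hence cannot be linked), the level changes by $\pm1$ across every edge, and $b_0$ is negative at level $0$, so induction along any root path gives ``even level $\Leftrightarrow$ negative component'' independently of the path chosen. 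The forward implications then follow because each component is linked to at least one other component of the opposite sign, which pins the unique positive component (resp.\ every positive component) to level $1$; the converses follow because a hole would have to occupy an even level $\geq 2$ and a second positive component an odd level $\geq 3$. You are also right to flag the one point the paper itself only presupposes: that $\lev$ is totally defined, i.e.\ that the link graph is connected, so that every component actually receives a level. The paper's proposition that each component is linked to at least one other component does not by itself yield this, and a complete write-up should either prove connectivity of $\mathcal{L}_A$ for bounded semi-algebraic regions or state it as a standing assumption; with that caveat made explicit, your proof is complete.
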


In other words, a bounded semi-algebraic region is a simple region with $k$ holes iff its link graph has the
form as the one given in Figure~\ref{fig:graph-sr}(a); and a bounded semi-algebraic region is a composite region (with $k$ atoms) iff its link
graph has the form as the one given in Figure~\ref{fig:graph-sr}(b).

\begin{figure}
\centering
\begin{tabular}{ccc}
\includegraphics[width=.22\textwidth]{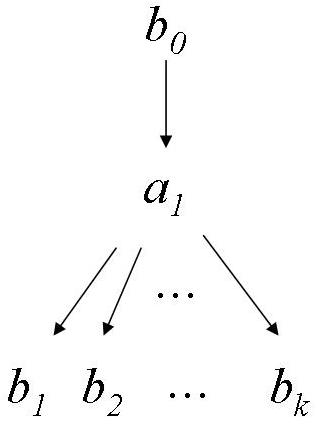}
&
&
\includegraphics[width=.3\textwidth]{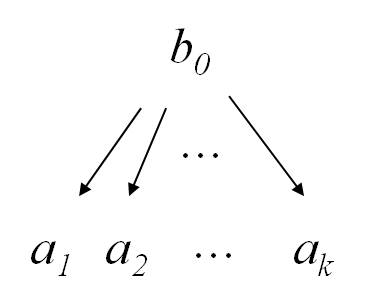}\\
(a) && (b)
\end{tabular}
\caption{The link graphs of (a) a simple region with holes with $k$ holes and (b) a composite region with $k$ atoms}
\label{fig:graph-sr}
\end{figure}

The link graphs of bounded semi-algebraic regions can be used to approximately determine if two semi-algebraic regions are homeomorphic.
\begin{theorem}
Let $A,B$ be two bounded semi-algebraic regions. If the link graph of $A$ is not isomorphic to the link graph of $B$, then $A$ is not homeomorphic to $B$.
\end{theorem}
For example, the regions in  Figure~\ref{fig:3comb} are pairwise non-homeomorphic. This is because
their link graphs are pairwise non-isomorphic, as shown in Figure~\ref{fig:graph-b}.

\subsection{Computing the Atoms of $A$ from the Link Graph}

In this subsection, we suppose that $A$ is a semi-algebraic region and $\mathcal{G}_A$ is its link graph. We show how to compute the atoms of $A$ from $\mathcal{G}_A$. Recall that each atom of $A$ is either an atom of the envelope $\widehat{A}$, or an atom of a face or hole of $A$. 

\begin{proposition}
\label{prop:atom-comp}
Each atom of $A$ is the closure of the union of several connected components of $A^\circ\cup A^e$.
\end{proposition}
\begin{proof}
Suppose $o$ is an atom  of $A$. Then we have $\partial o\subseteq \partial A$. Let $c$ be a bounded connected component of $A^\circ$ or $A^e$. Because $c$ cannot be separated by the boundary of $o$, we know either $o^\circ\cap c=\varnothing$ or $c\subseteq o^\circ$. Therefore, $o$ is the closure of the union of several components of $A^\circ\cup A^e$.
\end{proof}

We first consider the atoms of $\widehat{A}$.


\begin{proposition}\label{prop:generegion}
Suppose $\mathcal{G}_A$ (as an undirected graph) is decomposed into $k$ connected subgraphs
$\mathcal{G}_1(c),\cdots,\mathcal{G}_k(c)$ after the root node $b_0$ has been removed from $\mathcal{G}_A$. The closure of the union of the bounded components 
in each $\mathcal{G}_i(c)$ is an atom of the composite region $\widehat{A}$.
\end{proposition}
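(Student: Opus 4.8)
The plan is to show that the connected subgraphs obtained by deleting $b_0$ from $\mathcal{L}_A$ are in bijection with the positive components of $\widehat{A}$, matched by taking unions of components. First I would recall, via Proposition~\ref{prop:unbounded-component-is-cr}, that $\widehat{A}$ is a composite region; since the atoms of a composite region meet pairwise in at most a single point, the interior of $\widehat{A}$ is the disjoint union of their interiors, so the atoms of $\widehat{A}$ are exactly its positive components, each a simple region. I would also use Proposition~\ref{prop:generalized-region} to record that $\widehat{A}$ is the complement of $b_0^\circ$, i.e.\ the union of all bounded components of $A$, so that every node of $\mathcal{L}_A$ other than $b_0$ lies in $\widehat{A}$.

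Next, I would assign to each bounded component $c$ of $A$ the unique positive component $P$ of $\widehat{A}$ containing it: because $c^\circ$ is connected and contained in $\widehat{A}^\circ$, it lies in a single connected component of $\widehat{A}^\circ$, whose closure is this $P$. The heart of the argument is to prove that this assignment is constant on, and separates, the connected subgraphs of $\mathcal{L}_A-b_0$. For the easy direction I would show that a link between two bounded components forces them into the same $P$: if $c_1,c_2$ are linked then, since two positive (or two hole) components of $A$ meet only in a finite set, exactly one of them is positive and the other a hole, and they share a simple curve $\gamma\subseteq\partial c_1\cap\partial c_2$; at a relative-interior point $p$ of $\gamma$ that avoids the finitely many points where a third component intrudes, a full neighbourhood of $p$ is covered by $c_1\cup c_2\subseteq\widehat{A}$, so $p\in\widehat{A}^\circ$ and $c_1^\circ,c_2^\circ$ lie in the same connected piece of $\widehat{A}^\circ$. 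Transitivity then shows every connected subgraph maps into a single positive component of $\widehat{A}$, whence its union is contained in that atom.

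Conversely, I would show that two bounded components $c_1,c_2$ contained in the same positive component $P$ lie in the same connected subgraph. Since $P^\circ$ is connected, I can join an interior point of $c_1$ to one of $c_2$ by a path inside $P^\circ\subseteq\widehat{A}^\circ$; this path stays out of $b_0$, and each time it passes from one component of $A$ to another it must cross a shared boundary, producing a link between consecutive components. Reading off the components met along the path yields a chain of links, so $c_1$ and $c_2$ are connected in $\mathcal{L}_A-b_0$. Combining the two directions gives the claimed bijection, and since each connected subgraph is precisely the set of bounded components contained in one positive component $P$ of $\widehat{A}$, the union of its components equals $P$, an atom of $\widehat{A}$.

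The main obstacle is the crossing analysis in the converse direction: I must guarantee that the connecting path can be chosen to traverse the shared boundaries along their one-dimensional link-curves rather than through the finitely many singular points where three or more components meet (or where two components meet in only a point). Here semi-algebraicity is essential — $A$ has finitely many components, each with semi-algebraic boundary, and pairwise intersections are either finite or contain a simple curve — so the set of bad points is finite and the path in the connected open set $P^\circ$ can be perturbed to avoid it, making every boundary crossing a genuine link.
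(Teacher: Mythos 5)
Your argument is correct, and it is considerably more than the paper itself offers: the paper gives no proof of Proposition~\ref{prop:generegion} at all, disposing of it in the sentence preceding the statement as ``straightforward to see'' by analogy with Proposition~\ref{thm:hole} (where removing a bounded node $c$ yields the holes of $c$). What you do differently is supply the missing topological content directly: you first identify the atoms of the composite region $\widehat{A}$ with its positive components (via the observation that atoms of a composite region have pairwise disjoint interiors), and then establish a bijection between the connected subgraphs of $\mathcal{L}_A$ minus $b_0$ and those positive components, with the two key steps being (i) a link between bounded components forces their interiors into the same connected piece of $\widehat{A}^\circ$, proved by a local analysis at a generic point of the shared boundary curve, and (ii) two bounded components inside the same positive component $P$ are joined by a chain of links, proved by a general-position path in $P^\circ$. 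Both steps are sound, and your explicit appeal to semi-algebraicity (finitely many components, pairwise intersections finite or containing a simple curve, hence a finite set of singular points to avoid) is exactly the hypothesis that makes the perturbation argument legitimate; this is the same finiteness the paper relies on implicitly throughout Section~5. The only point you gloss is the very last one: having shown each connected subgraph consists precisely of the bounded components of $A$ contained in a single positive component $P$ of $\widehat{A}$, you still need that their union is all of $P$ rather than a proper subset; this follows because $\widehat{A}$ is by definition the union of all bounded components, the positive components of $\widehat{A}$ have pairwise disjoint interiors, and each is the closure of its interior, so $P^\circ$ (hence $P$) is covered by the components assigned to $P$. With that one line added, your proof is complete and, in effect, supplies the justification the paper omits.
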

\begin{proof}
Suppose $c_1$ and $c_2$ are linked. Then by Lemma~\ref{lemma:union} $\overline{c_1}\cup \overline{c_2}$ has a connected interior. In general, suppose $c_1,c_2,\cdots,c_k$ is a path in the undirected graph $\mathcal{G}_A$, then $\overline{c_1\cup c_2\cup \cdots \cup c_k}=\overline{c_1}\cup \overline{c_2}\cup\cdots\cup \overline{c_k}$ has a connected interior. Because each $\mathcal{G}_i(c)$ is a connected subgraph of $\mathcal{G}(A)$, we know the interior of $\overline{\bigcup\{c': c'\in \mathcal{G}_i(c)\}}$ is connected. Write $U$ for this connected open set. Suppose the envelope $\widehat{A}$ has atoms $o_1,o_2,\cdots, o_s$. Because $o_1^\circ,o_2^\circ, \cdots, o_s^\circ$ are the connected components of the interior of $\widehat{A}$. We know $U$ is contained in $o_i^\circ$ for some $1\leq i\leq s$. 

On the other hand, by Proposition~\ref{prop:atom-comp} we know $o_i$ is the closure of the components of $A$ it contains. We show $o_i$ contains exactly those components in $\mathcal{G}_i(c)$.  In fact, suppose $c_1,c_2$ are two components contained in $o_i$. We show there is a path in $\mathcal{G}(A)$ that connects $c_1$ and $c_2$. By Proposition~\ref{prop:decomposition-sasets}, we suppose $\partial A$ is the disjoint union of $k$ points in $\mathbf{N}=\{P_1,\cdots,P_k\}$  and $l$ open arcs in $\mathbf{E}=\{\varphi_1,\cdots,\varphi_l\}$, where each $\varphi_j$ has two endpoints, both are in $\mathbf{N}$, and each point $P_i$ is an endpoint of some arc in $E$.  For any $Q_1\in c_1$ and $Q_2\in c_2$, there is a path $\alpha$ in the interior of $o_i$ which is disjoint from $\mathbf{N}$. Because $\alpha$ is compact, we can find a finite set of open disks centred at points in $\alpha$ which covers $\alpha$ and each disk intersects at most two linked components. It is now clear that there is a path in $\mathcal{G}(A)$ that connects $c_1$ and $c_2$.   

Therefore, each atom $o_i$ is the closure of the union of the bounded components in some $\mathcal{G}_i(c)$. 
\end{proof}

We next compute the envelope and holes of a bounded component of $A$.
\begin{proposition}\label{thm:hole}
Suppose $c$ is a bounded component of $A$ and $\mathcal{G}_A$ (as an undirected graph) is decomposed into $k+1$ connected subgraphs $\mathcal{G}_0(c),\mathcal{G}_1(c),\cdots,\mathcal{G}_k(c)$ after the node $c$ has been removed from $\mathcal{G}_A$, where $\mathcal{G}_0(c)$ is the connected subgraph that contains $b_0$. Then a component 
$c_1\not=c$ is in $\mathcal{G}_0(c)$ iff there is a path from $b_0$ to $c_1$ that does not pass through $c$. Moreover, for each $i\geq 1$, the closure of the union of the components in $\mathcal{G}_i(c)$ is a hole of $c$, and the envelope of $c$ is the closure of the union of $c$ and all its holes.
\end{proposition}
\begin{proof}
If $c_1\not=c$ and there is a path from $b_0$ to $c_1$, then by definition $c_1$ is a node in $\mathcal{G}_0(c)$. On the other hand, suppose $c_1\not=c$ and any path from $b_0$ to $c_1$ passes $c$. Then $c$ must be in a connected subgraph different from $\mathcal{G}_0(c)$. 

Because $c$ is a bounded component of $A$, we know its closure is a simple region with holes. Write $\overline{c}=(d_0;d_1,\cdots,d_s)$. We show $s=k$ and each $d_i$ is the closure of the union of the components in a unique $\mathcal{G}_i(c)$. We first note that each component not in $\mathcal{G}_0(c)$ is contained in some $d_i$ ($i>0$), and each $d_i$ is the closure of the union of several components which are not in $\mathcal{G}_0(c)$.

As in the proof of Proposition~\ref{prop:generegion}, we can show that (i) the closure of the union of the components in each $\mathcal{G}_i(c)$ has a connected interior and hence is contained in some $d_i$ ($i>0$); (ii) for any two components $c_1,c_2$ contained in $d_i$ ($i>0$), there is a path from $c_1$ to $c_2$ in $\mathcal{G}(A)$ whose nodes are all components contained in $d_i$. This guarantees that each $d_i$ ($i>0$) is the closure of the union of components in some unique $\mathcal{G}_j(c)$ ($j>0$). It is then clear that $s=k$, and $d_0$ is the union of the closure of $c$ and all holes $d_i$ ($i>0$).
\end{proof}

By the above result, it is clear that a bounded component $c$ has no holes iff  $\mathcal{G}_A$ is still connected after $c$ is removed. We end this section with an example.

\begin{figure}[htb]
\centering
\begin{tabular}{ccc}
\includegraphics[width=.36\textwidth]{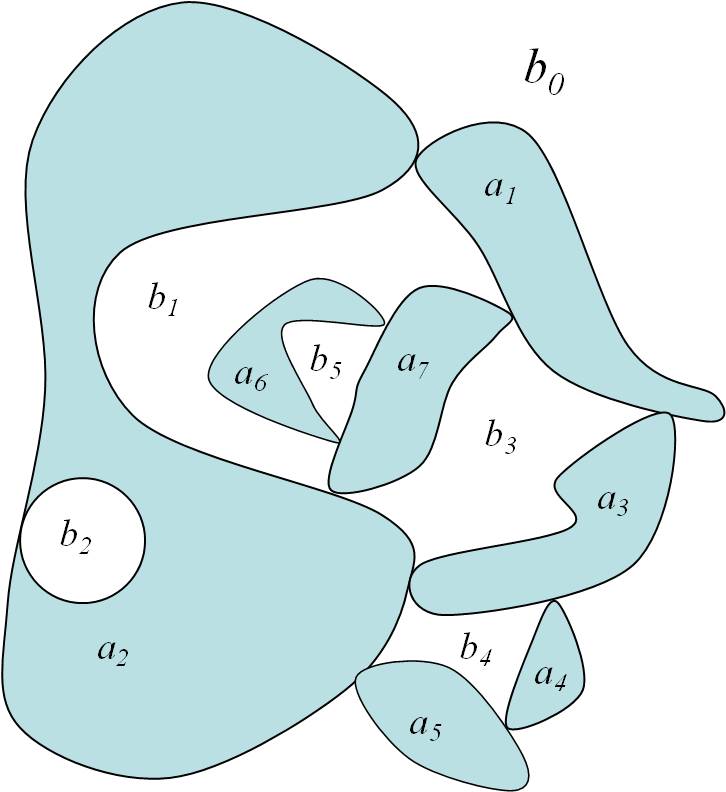}
&
\includegraphics[width=.3\textwidth]{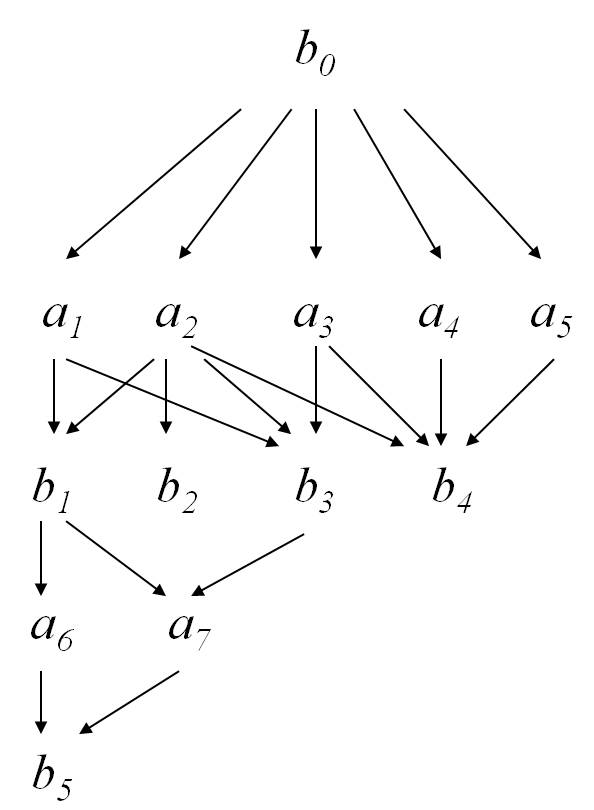}
&
\includegraphics[width=.2\textwidth]{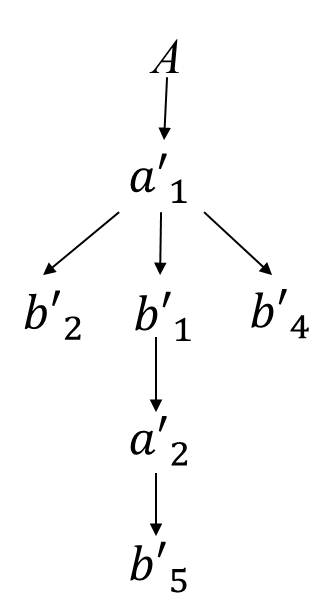}
\end{tabular}
\caption{A bounded semi-algebraic region $A$ (left), its link graph (center), and its W-B tree (right), where $a_1'$ is the envelope of $A$, $a_2'$ is the closure of $a_6\cup a_7$, $b_i'$ is the closure of $b_i$ for $i=2,4,5$, and $b_1'$ is the closure of $b_1\cup b_3\cup b_5\cup a_6\cup a_7$.} \label{fig:co2}
\end{figure}

\begin{example}\label{exam:co2}
Figure~\ref{fig:co2} shows a bounded semi-algebraic region and its link
graph, where an arrow from $c_1$ to $c_2$ suggests that the level of
$c_1$ is lower than that of $c_2$. It is
easy to see that only $a_2$ has a hole, and the closures of the other bounded components of this region
are all simple regions.
Figure~\ref{fig:subgraphs} shows the connected subgraphs of $\mathcal{G}_A$ obtained after $a_2$, $a_3$, and $b_0$ are removed, respectively, from $\mathcal{G}_A$.
\end{example}

\begin{figure}[htb]
\centering
\begin{tabular}{ccc}
\includegraphics[width=.3\textwidth]{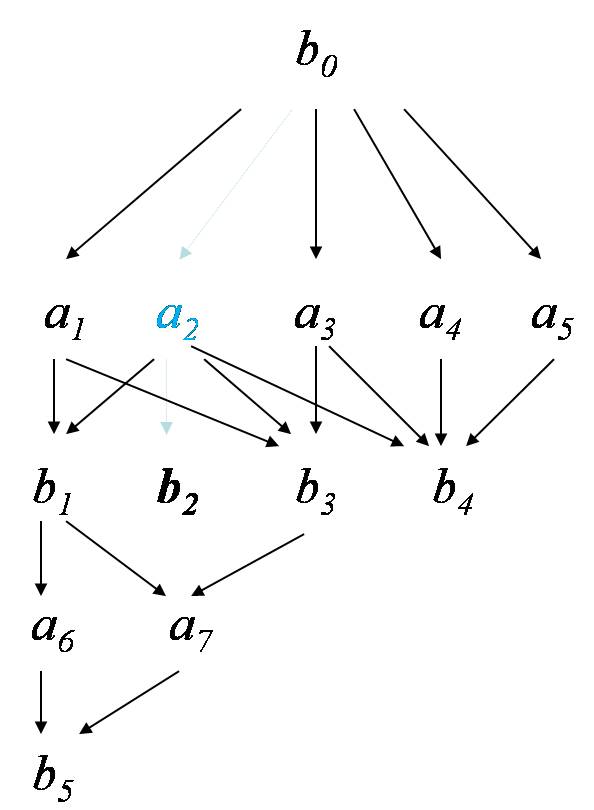}
&
\includegraphics[width=.3\textwidth]{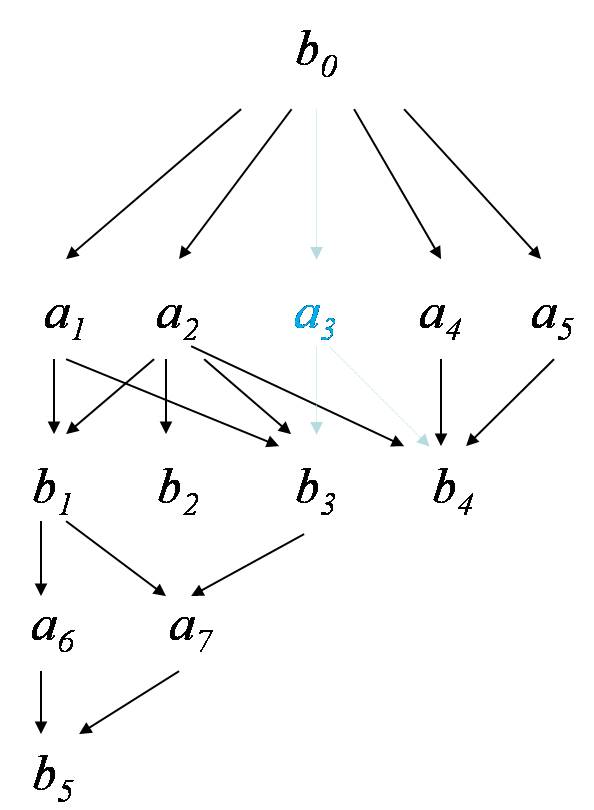}
&
\includegraphics[width=.3\textwidth]{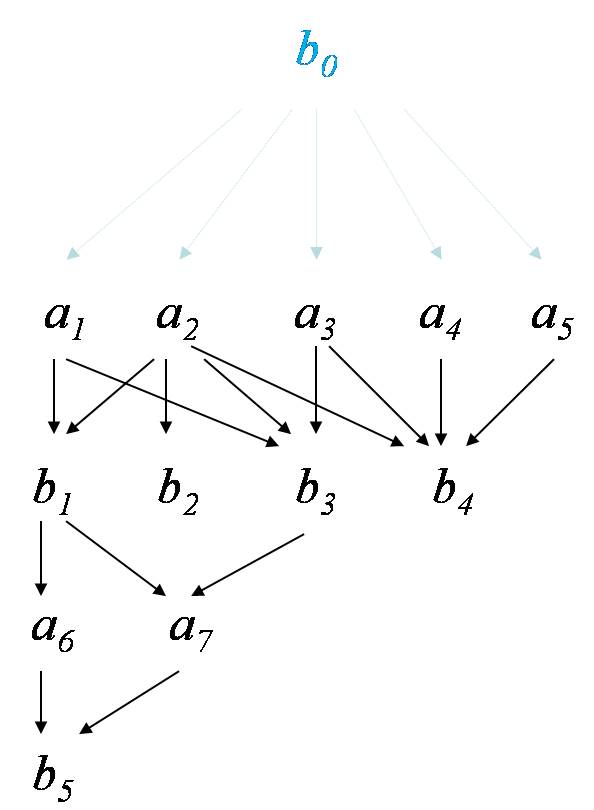}
\end{tabular}
\caption{The connected subgraphs of $\mathcal{G}_A$ in Figure~\ref{fig:co2} obtained after $a_2$, $a_3$, and $b_0$ are removed, respectively, from $\mathcal{G}_A$} \label{fig:subgraphs}
\end{figure}

\section{Further Discussion and Related Works}
In this section, we provide a detailed comparison of our work with related works. Section~6.1 discusses related works on various kinds of complex regions. Section~6.2 then compares our model with the tree model of \cite{WorboysB93}, and shows how to derive the tree model from the link graph model.  We show in Section~6.3 how our graph model can be used to produce representations of bounded regions at multiple levels of detail.

\subsection{Related Works on Complex Regions}
The notion of a simple region with holes and its generalized region (called `envelope' in this paper) were first introduced in \cite{EgenhoferCF94}. Our definition is slightly different from that given in \citep{EgenhoferCF94}, but is consistent with the one used in \citep{SchneiderB06}.

In  \citep{EgenhoferCF94} Egenhofer et al. proposed  two definitions of simple regions with holes, both requiring that the region has a connected interior. The first definition further requires that the boundaries of different exterior components should be disjoint. It is clear that this restriction is too strong in many real-world applications. While this constraint is relaxed in the second definition, it allows `spikes' in the exterior of such a region. As a consequence, such a simple region with holes may not be a regular closed set, hence not a plane region in our sense. Figure~\ref{fig:2srhs} shows such an example, where the `region' $a$ in the left of this figure has a spike (\emph{viz.} the right edge of $h$) in the exterior of $a$, which makes it a non-regular closed set because  $a^\circ$ (showing in the right of this figure) is one-piece and its closure $\overline{a^\circ}$ is a simple region strictly contained in $a$.

\begin{figure}
\centering
\begin{tabular}{c}
\includegraphics[width=.5\textwidth]{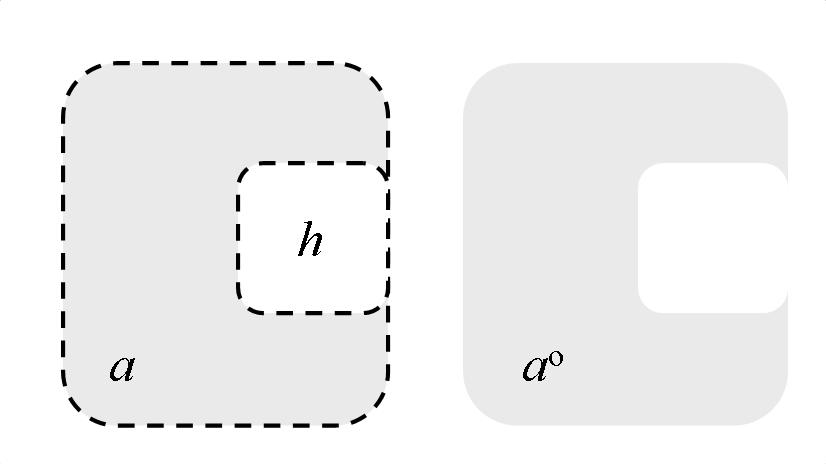}
\end{tabular}
\caption{A simple region with holes in the sense of \citep{EgenhoferCF94} which is not regular closed} \label{fig:2srhs}
\end{figure}

The notion of a composite region was first introduced in  \cite{ClementiniDC95}, where two components are allowed to meet at more than one points. This implies that a composite region in the sense of \cite{ClementiniDC95} may have a disconnected exterior, which is not allowed in our definition. For example, the region in Figure~\ref{fig:3comb}(c) is considered to be a composite region in \cite{ClementiniDC95}, but not a composite region in our sense.

Our internal structure model provides a unified description of various bounded regions. In particular, our Proposition~\ref{prop:envelope-is-cr} shows that, for a bounded semi-algebraic region $A$, the faces and holes of $A$ are all simple regions with holes, while the envelope of $A$ is a composite region. The link graph of $A$ shows that $A$ can be constructed iteratively by using the notions of composite region and simple region with holes. Roughly speaking, we have the following slogan
\begin{equation*}
\mbox{Complex Regions $=$ Simple Regions with Holes $+$ Composite Regions.}
\end{equation*}

Schneider and Behr \cite{SchneiderB06} defined a complex region as a bounded region $A$ which has finite \emph{faces} and the intersection of any two faces is a finite set. They considered a complex region to be a collection of faces. For example, the bounded region shown in Figure~\ref{fig:co2} is interpreted as having seven faces: one has a hole, the other six are simple regions. How these faces are composed is not mentioned explicitly in their definition. As another example, the two complex regions in Figure~\ref{fig:SBcomplex} have the same set of faces, but different link graphs and internal topological structures.
When restricted to semi-algebraic sets, all bounded regions considered in this paper are complex regions in the sense of \cite{SchneiderB06}.

\begin{figure}
\centering
\begin{tabular}{cccc}
\includegraphics[width=.3\textwidth]{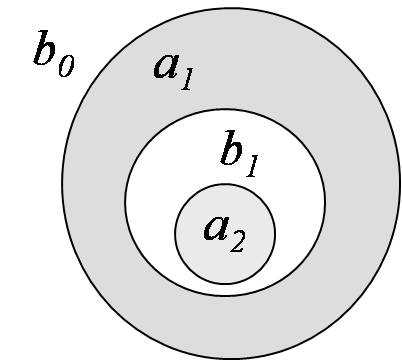}
&
\includegraphics[width=.06\textwidth]{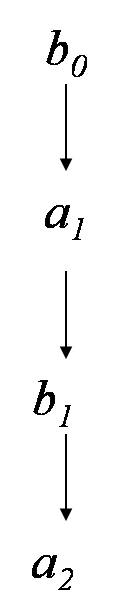}
&
\includegraphics[width=.35\textwidth]{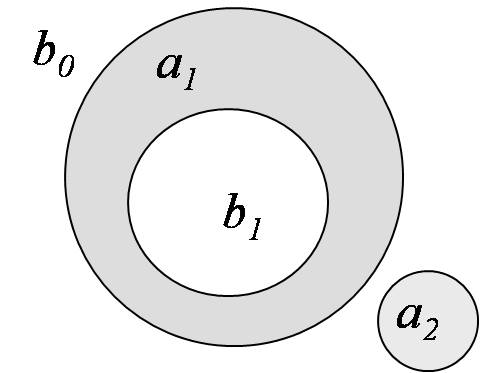}
&
\includegraphics[width=.18\textwidth]{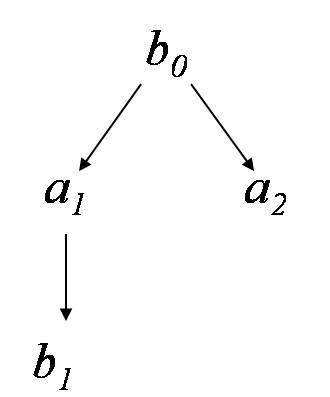}
\end{tabular}
\caption{Two complex regions and their link graphs.} \label{fig:SBcomplex}
\end{figure}

While different types of plane regions have been introduced, the above works take a complex region as a whole and focus on the topological relations between two complex regions. These works do not investigate the internal structure of general complex regions.

\subsection{The Worboys-Bofakos Tree Model}

The work  of Worboys and Bofakos  \cite{WorboysB93}  is perhaps the work most closely related to ours. Suppose $A$ is a bounded region that has a finite representation. In \citep{WorboysB93}, each region $A$ is represented as a tree (called the W-B tree of $A$), where the root node has no special meaning (could be interpreted as $A$ itself), and each non-root node represents a simple region. For two non-root nodes $m,n$, if $m$ is a child node of $n$, then $m$ is contained in $n$, and the intersection of $m$ and the boundary of $n$ is a finite set. All child nodes of $n$ form a composite region in our sense (called a \emph{base area} in \citep{WorboysB93}), which may separate $n$ into pieces. Furthermore, all nodes with depth $i$ also form a composite region, denoted by $A_i$, and called the base area of $A$ at level $i$. It is easy to see that $A_1=\widehat{A}$, and $ A_i\supseteq A_{i+1}$ for any $i\geq 1$.

Given a bounded semi-algebraic region $A$, we describe a method for computing the W-B tree of $A$. First, we next introduce a root node to denote $A$ itself; then compute all atoms of $\widehat{A}$ using Proposition~\ref{prop:envelope-is-cr}, and add these atoms as the children of the root node. For each atom $o$ of $\widehat{A}$, we consider the components of $A$ that are contained in $o$ but meet $\partial o$ at at most finite points. These components form another bounded semi-algebraic region, written $A_o$. We note that $A_o$ has a simpler internal structure than $A$, as it has fewer atoms and fewer levels. Suppose we have already computed the W-B tree of $A_o$. Putting the W-B tree of $A_o$ (with the root removed) directly under $o$, and repeating this procedure for all other atoms of $\widehat{A}$, we will obtain the W-B tree of $A$. 

Interestingly, the W-B tree of each bounded semi-algebraic region $A$ can also be computed directly from its link graph $\mathcal{G}_A$. Write $\mathcal{G}^{ i}_A$ for the restriction of $\mathcal{G}_A$ to components with level bigger than or equal to $i$, where $i$ is a natural number no bigger than the highest level of components of $A$. Recall that Proposition~\ref{prop:generegion} asserts that each atom of $\widehat{A}$ (\emph{i.e.} the base area of $A$ at level 1) is identical to the closure of the union of the bounded components in a unique maximally connected subgraph of $\mathcal{G}^{ 1}_A$. In general, it can similarly be shown that each atom of $A_i$ (the base area of $A$ at level $i$) is identical to the closure of the union of the bounded components in a unique maximally connected subgraph of $\mathcal{G}^{ i}_A$. Moreover, an atom $o$ of $A_{i+1}$ is a child of an atom $o'$ of $A_i$ iff the maximally connected subgraph of  $\mathcal{G}^{ i+1}_A$ corresponding to $o$ is contained in the maximally connected subgraph of $\mathcal{G}^{ i}_A$ corresponding to $o'$. In this way, we will obtain the W-B tree of $A$ from $\mathcal{G}_A$. Figure~\ref{fig:co2} shows a bounded region, its link graph, and its W-B tree.  It is easy to see that  only $\mathcal{G}^{ 2}_A$ has more than one maximally connected graph, and the corresponding atoms of the base area at level 2 are $\overline{b_2},\overline{b_4}$, and the closure of $b_1\cup b_3\cup b_5\cup a_6\cup a_7$.

The link graph of a bounded semi-algebraic region, however,  cannot be derived from its W-B tree.  Consider the two regions $A$ and $C$ illustrated in Figure~\ref{fig:3comb}. These two regions have non-isomorphic link graphs (cf. Figure~\ref{fig:graph-b}). Nevertheless, $A$ and $C$ have the same tree model, which is a chain with three nodes. In particular, the two faces $\overline{a_1},\overline{a_2}$ do not appear in the tree model of $C$. This also implies that the atoms appearing in the tree model are not sufficient for determining the global 9-intersection relation.

The reader may have noticed that this difference between the tree model and the link graph model mainly arises because the holes separate their carrier (\emph{i.e.} their parents in the link graph or their parent in the tree model). Suppose this never happens in a bounded region $A$, \emph{i.e.}, no node in the link graph has two or more parents. Then the link graph of $A$ is a tree and hence identical with the W-B tree of $A$.

While in the above sections we have justified, theoretically, the importance of our internal topological structure model, we next justify, informally, its applicability in practical spatial problems.
\begin{remark}\label{rmk:pseduhole}
While geographic entities that have multiple faces and/or holes are common, we seldom see that a geographic entity consists of multiple faces which are mutually connected at more than one point. Despite this, spatial objects stored in computers are often (polygonal or semi-algebraic) approximations of real-world geographic entities. It is very possible that two faces of a complex polygon may meet at two or more points. For instance, consider the water body in the south of East Europe formed by the Black Sea and the Sea of Azov.\footnote{See \url{http://upload.wikimedia.org/wikipedia/commons/5/52/Black_Sea_map.png}} This geographic entity has a hole, namely the Crimean Peninsula. At a coarse resolution, this water body has a representation whose internal topological structure is the same as region $C$ in Figure~\ref{fig:3comb}.

The model could also be used to describe topological changes in dynamic spatial phenomena \citep{JiangW09}. For example, consider a bushfire in a forest represented as a simple region. The fire starts at the center of the forest, and spreads and then separates the forest into two pieces. It is not hard to imagine that, at certain time point, the fire may possibly reach the two ends of the forest simultaneously. The scenario of this snapshot has the same internal topological structure as region $C$ in Figure~\ref{fig:3comb}.
\end{remark}
In the next subsection, we will discuss another application of our model in practical spatial problems.

\subsection{Generalization by Dropping}
Map generalization is a very important technique used in cartography and GISs.  \cite{EgenhoferCF94} introduced a generalization method for simple regions with holes. For a simple region with holes $A=(a_0;a_1,\cdots,a_k)$, we obtain
its envelope $a_0$ by dropping all its holes. In this section, we extend this idea to produce representations of bounded regions at multiple levels of detail.

Suppose $A$ is a bounded semi-algebraic region. We obtain its envelope by merging all its bounded components.  Combined with qualitative size information \citep{GereviniR02}, we can also
obtain less complicated regions by dropping \emph{some} small components,
together with their holes, from a bounded semi-algebraic region.

\begin{figure}
\centering
\begin{tabular}{cccc}
\includegraphics[width=.35\textwidth]{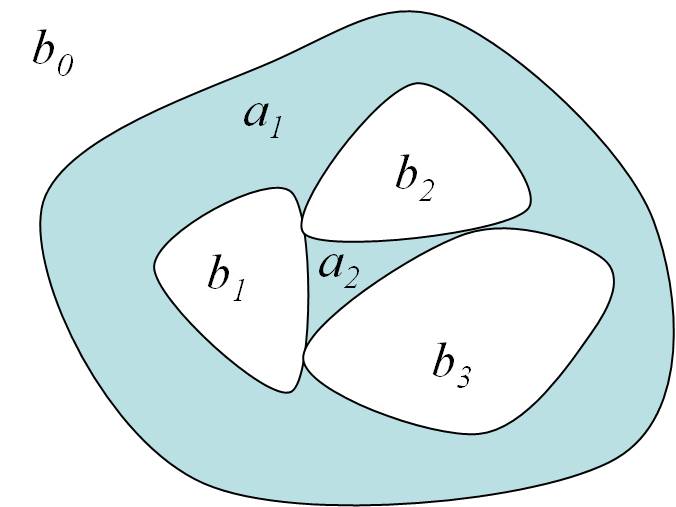}
&
\includegraphics[width=.15\textwidth]{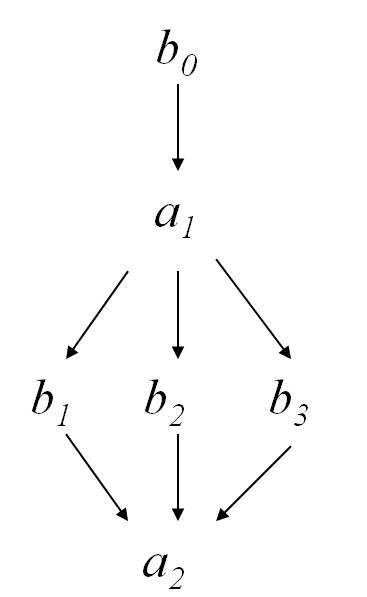}
&
\includegraphics[width=.35\textwidth]{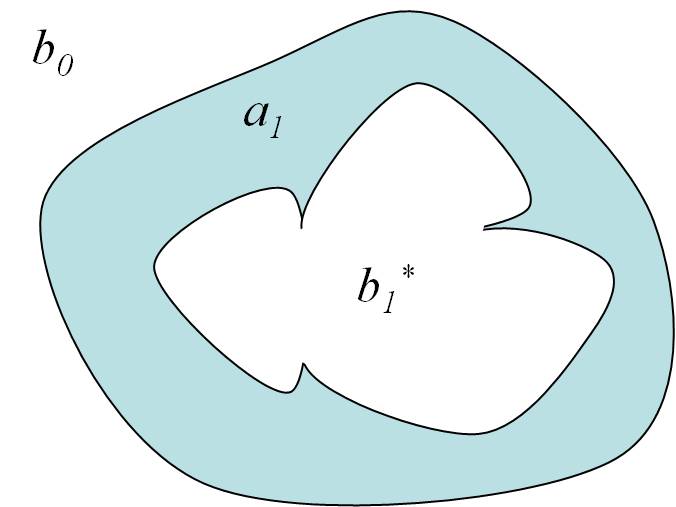}
&
\includegraphics[width=.05\textwidth]{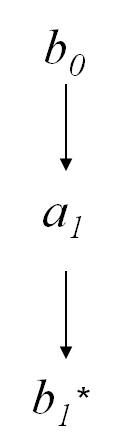}
\end{tabular}
\caption{A complex region (left) and its generalization (right),
where $b_1^\ast$ is the interior of the closure of $a_2\cup b_1\cup b_2\cup b_3$.} \label{fig:co3a}
\end{figure}

A bounded semi-algebraic region can be generalized step by step by dropping one
face/hole in each step. Let $A$ be a bounded semi-algebraic region. Suppose $c$ is a bounded component of $A$ and 
$\overline{c}$ has no children. Let $p_1,\cdots,p_k$ be the
parents of $c$ in the link graph. It is easy to see that
$T=\overline{c}\cup \bigcup_{i=1}^k \overline{p_i}$ is also a connected region. In
this way, we obtain a new region $A_1$ by merging the hole
$\overline{c}$ into its parents. Note that each bounded component $c^\prime\not\in \{c,p_1,\cdots,p_k\}$  of $A$ is a component of $A_1$. Moreover, $T$ is a face/hole of $A_1$. Write $r$ for the interior of $T$. The link graph of $A_1$ is obtained by setting
\begin{itemize}
\item $V_1=(V\setminus\{c,p_1,\cdots,p_k\})\cup\{r\}$;
\item $E_1=(E\cap V_1\times V_1)\cup
\{(m,r):(\exists i)(m,p_i)\in E\}.$
\end{itemize}
Figure~\ref{fig:co3a} shows a bounded semi-algebraic region and one step of its
generalization. Note that after each step of dropping, we obtain a bounded semi-algebraic region that has fewer faces/holes. Step by step, we will obtain a region that cannot be generalized. This is exactly $\widehat{A}$,
the \emph{envelope} of $A$.

\section{Conclusion}
In this paper we have established a qualitative model for
representing the internal topological structure of bounded plane regions. For
the first time, we have introduced the notion of holes, envelopes, and atoms to general bounded regions. Assuming $A$ is a bounded semi-algebraic region, we proved that the envelope of $A$ is a composite region and each face/hole of $A$ is a simple region with holes. In this way, we elegantly put the concepts of a composite region and a simple region with holes in the general framework of complex regions. 

The atoms of $A$ are naturally defined as the simple regions involved in the envelope and faces and holes of $A$. We have proved that these atoms are necessary and sufficient for
determining the nine-intersection relation between bounded semi-algebraic regions.
We believe this provides a partial justification for the rationale 
of applying the 9IM approach on complex regions. It also suggests
that the 9-intersection relation between complex regions can be implemented
through the implementation of the RCC8 (or the 9IM) relation between
simple regions.

For each bounded semi-algebraic region $A$, we also constructed a layered graph, called the link graph of $A$, to represent the internal topological structure of $A$. Given the link graph of $A$, we also described a method for computing the W-B tree of $A$. This shows that the link graph is finer than the tree model. Moreover, the link graph can be used to answer spatial queries that the tree model cannot answer. An example of such a query is  ``How many faces does $A$ (or a hole of $A$) have"? Furthermore, the layered graph representation also provides a natural way to generalize complex regions. This will play an important role in automatic cartography and GISs.

Future work will consider spatial reasoning with this internal structure model. Some related work has been carried out for simple regions with holes \citep{VasardaniE09}.

\appendix
\section{Proof of Proposition~\ref{prop:envelope-is-cr}}
In this section, we provide rigorous proof for Proposition~\ref{prop:envelope-is-cr}. We recall that a set in $\mathbb{R}^2$ is an open arc if it is homeomorphic to the open unit interval $(0,1)$. For an undirected graph $G=(N,E)$, a \emph{cycle} in $G$ is a sequence $(v_0,e_1,v_1,\cdots,e_d,v_d)$ so that $e_i$ is an edge connecting $v_{i-1}$ and $v_i$ ($i=1,\cdots,d$), with $v_0=v_d$; the \emph{degree} of a node $P$ in $N$ is the number of edges in $E$ emanating from $P$. 

Suppose $A$ is a bounded semi-algebraic region, and $c$ is an interior or exterior component of $A$. We define an undirected graph $\mathbf{G}_c=(\mathbf{N}_{c},\mathbf{E}_c)$ and show that all its nodes have even degrees.  It is then well-known that $\mathbf{G}_c$ can be decomposed into pairwise edge-disjoint cycles, \emph{i.e.}, no two cycles have an edge in common, and moreover, no cycle traverses the same edge more than once (see \emph{e.g.} \cite{Trudeau1994}). Furthermore, it is straightforward to show that $\mathbf{G}_c$ can be decomposed into pairwise edge-disjoint cycles that are simple, where a cycle is \emph{simple} if there are no repeated nodes other than the starting and ending node. Our proof of Proposition~\ref{prop:envelope-is-cr} is based upon the above observation.

We begin with the construction of $\mathbf{G}_c=(\mathbf{N}_{c},\mathbf{E}_c)$. By Proposition~\ref{prop:decomposition-sasets}, we suppose $\partial A$ is the disjoint union of $k$ points in $\mathbf{N}=\{P_1,\cdots,P_k\}$  and $l$ open arcs in $\mathbf{E}=\{\varphi_1,\cdots,\varphi_l\}$, where each $\varphi_j$ has two endpoints, both are in $\mathbf{N}$, and each point $P_i$ is an endpoint of some arc in $E$.  Regard points in $\mathbf{N}$ as nodes, and connect two nodes by an arc $\varphi$ if they are the two endpoints of $\varphi$. Let $\mathbf{G}=(\mathbf{N}, \mathbf{E})$ be the undirected graph with node set $\mathbf{N}$, and edge set $\mathbf{E}$. Note that two nodes may be connected by more than one arc. 

Take an arbitrary but fixed point $Q$ in the component $c$ of $A$. We say a point $P$ in $\partial A$ is \emph{reachable} (in $c$ from $Q$) if there is an arc $\alpha$ that connects $P$ to $Q$ and $\alpha\cap\partial A=\{P\}$. Because $c$ is a connected component, we know $\alpha\setminus\{P\}$ is contained in $c$. For any open arc $\varphi$ in $\mathbf{E}$, it is easy to see that a point in $\varphi$ is reachable iff all points in $\varphi$ are reachable. We therefore say an open arc is reachable if it contains a point that is reachable. Write $\mathbf{E}_c$ for the set of open arcs $\varphi$ in $\mathbf{E}$ that are reachable, and $\mathbf{N}_{c}$ for the set of points in $\mathbf{N}$ that are reachable. It is routine to check that the endpoints of each reachable arc are  reachable. Write $\mathbf{G}_c$ for the undirected graph $(\mathbf{N}_{c},\mathbf{E}_c)$.  

We observe that each node in $\mathbf{G}$ ($\mathbf{G}_c$) has an even degree. 
\begin{lemma}
\label{lemma:even-degree}
The degree of each node $P$ in $\mathbf{G}$ (or $\mathbf{G}_c$) is even.
\end{lemma} 
\begin{proof}
Write $\varphi_{j_1},\cdots,\varphi_{j_s}$ for all arcs in $E$ emanating from $P$ and suppose $\varphi_{j_t}$ is (clockwise) the next arc of $\varphi_{j_{t-1}}$ ($1< t\leq s$, and assume $\varphi_{j_{s+1}}=\varphi_{j_1}$). Take a sufficiently small open disk $U$ centred at $P$. Write $\triangle_t$ for the intersection of $U$ with the area between two consecutive arcs $\varphi_{j_t}$ and $\varphi_{j_{t+1}}$. Then $\triangle_t$ is either contained in $A^\circ$ or $A^e$. Moreover, $\triangle_{t}$ and $\triangle_{t+1}$ cannot be both contained in $A^e$ or both contained in $A^\circ$.  This implies that the degree of $P$ in $\mathbf{G}$ is even.

Similarly, for each reachable point $P$ in $\mathbf{N}$, we show that there are $2m$ $(m\geq 1)$ reachable arcs emanating from $P$. Because $P$ is reachable, there exists at least one $\triangle_t$ that is contained in $c$. For any $1\leq t<s$, we note that $\triangle_t$ and $\triangle_{t+1}$ cannot both be contained in $c$. This shows that there are $2m$ reachable arcs emanating from $P$, where $m$ is the number of $\triangle_t$ contained in $c$.
\end{proof}

We now show \emph{the envelope of $A$ is a composite region.}
\begin{proof}
Let $b_0$ be the unbounded exterior component of $A$. Suppose the graph $\mathbf{G}_{b_0}$ can be decomposed into pairwise edge-disjoint cycles $\gamma_1,\gamma_2,\cdots,\gamma_s$ that are simple. By abuse of notation, we also denote by $\gamma_i$ the subset of $\mathbb{R}^2$ that is the union of the nodes (\emph{i.e.} points) and edges (\emph{i.e.} open arcs) in $\gamma_i$. It is easy to see that each $\gamma_i$ is a Jordan curve. Write $d_i$ for the simple region with boundary $\gamma_i$. 

We first show that $\widehat{A}=\bigcup_{i=1}^s d_i$.  Recall that a point $P$ is reachable (in $b_0$ from a pre-chosen point $Q$) if there is an open arc $\alpha_P$ contained in $b_0$ which connects $Q$ to  $P$. By the choice of nodes and edges in $\mathbf{G}_{b_0}$, we know that each point in $\gamma_i=\partial d_i$ is reachable. Because $b_0$ is the unbounded exterior component of $A$, we assert that any point outside $\bigcup_{i=1}^s d_i$ is also reachable. Suppose $P'$ is a point outside $\bigcup_{i=1}^s d_i$. Take $P\in \bigcup_{i=1}^s d_i$ such that $d(P',P)=d(P',\bigcup_{i=1}^s d_i)$. It is easy to see that $P\in \bigcup_{i=1}^s \gamma_i$ and the path $\alpha_P\cup \overline{PP'}$ connects $Q$ to $P'$. This path could be modified as an open arc in $b_0$ that connects $Q$ to $P'$.  This means that the exterior of $\bigcup_{i=1}^s d_i$ is contained in $b_0$. On the other hand, by Jordan Curve Theorem, no interior point of $d_i$ ($1\leq i\leq s$) is reachable. This implies that $b_0$ is disjoint from $\bigcup_{i=1}^s d_i$. Therefore, we have $\bigcup_{i=1}^s d_i$ is the set complement of $b_0$, which happens to be the envelope of $A$. Hence $\widehat{A}=\bigcup_{i=1}^s d_i$. 

We next show that $d_1,d_2,\cdots,d_s$ are the faces of $\widehat{A}$. Because a reachable point is not in the interior of any $d_i$, we have $d_i^\circ\cap \partial d_j=\varnothing$ for any $i\not=j$.  We show this implies that $d_i^\circ\cap d_j^\circ=\varnothing$  for any $i\not=j$. Suppose this is not the case. Take a point $P\in d_i^\circ\cap d_j^\circ$. Note that because $d_i^\circ\cap \partial d_j=\varnothing$ we have either $d_i^\circ\not\subseteq d_j$ or $d_j^\circ\not\subseteq d_i$. Take $d_i^\circ\not\subseteq d_j$ as an example. We have a point $Q_1\in d_i^\circ$ but $Q_1\not\in d_j$. By $P\in d_j^\circ$, but $Q_1\not\in d_j$, we know any arc that connects $P$ to $Q_1$ should intersect the boundary of $d_j$. Because $P$ and $Q_1$ are both in the connected open set $d_i^\circ$, there is an arc $\varphi$ contained in $d_i^\circ$ that connects $P$ to $Q_1$. So we have $\varphi \cap \partial d_j \not=\varnothing$ and $\varphi\cap \partial d_j \subseteq d_i^\circ$. This contradicts the assumption that $ d_i^\circ \cap \partial d_j =\varnothing$.

Because $\gamma_i$ and $\gamma_j$ have no common edges, we know $\gamma_i\cap \gamma_j$ is a finite set. 
If $P_1,P_2$ are two different points in $\gamma_i\cap \gamma_j$, since $\gamma_i$ is a simple curve, we have two arcs  $\alpha_1,\alpha_2$ contained in $\gamma_i$ that connect $P_1$ to $P_2$. Similarly, there are two arcs $\beta_1,\beta_2$ contained in $\gamma_j$ that connect $P_1$ to $P_2$. Without lack of generality, suppose $\beta_1\cap \gamma_i=\{P_1,P_2\}$, \emph{i.e.} $\beta_1$ has no other common points with  $\gamma_i$. This means that $\alpha_1\cup \beta_1$  and  $\alpha_2 \cup \beta_1$ are Jordan curves. Consider the simple regions bounded by $\alpha_1\cup \beta_1$  and, respectively, $\alpha_2 \cup \beta_1$. It is clear that one must contain the other, and the larger one also contains a boundary point of $\gamma_i$ in its interior. This contradicts the assumption that each boundary point in $\gamma_i$ is reachable.  Therefore, the intersection of $\gamma_i$ and $\gamma_j$ is either empty or a singleton. Because $\gamma_i=\partial d_i$, $\gamma_j=\partial d_j$, and $d_i^\circ\cap d_j^\circ=\varnothing$, we know $d_i\cap d_j$ is either empty or a singleton. As $\widehat{A}=\bigcup_{i=1}^s d_i$ has been proved, we know that $\widehat{A}$ is a composite region.
\end{proof}





Similarly, we can prove that \emph{each face or hole of $A$ is a simple region with holes.}
\begin{proof}
The proof is similar to that for the envelope. Let $c$ be a bounded (interior or exterior) component of $A$. Suppose the graph $\mathbf{G}_c$ can be decomposed into pairwise edge-disjoint cycles $\gamma_0,\gamma_1,\cdots,\gamma_s$ that are simple. By abuse of notation, we also denote by $\gamma_i$ the subset of $\mathbb{R}^2$ that is the union of nodes (\emph{i.e.} points) and edges (\emph{i.e.} open arcs) in $\gamma_i$. It is easy to see that each $\gamma_i$ is a Jordan curve. Write $d_i$ for the simple region with boundary $\gamma_i$. 

We first show, for any $1\leq i\not=j\leq s$, that $\gamma_i\cap \gamma_j$ is either empty or a singleton. Because $\gamma_i$ and $\gamma_j$ have no common edges, we know $\gamma_i\cap \gamma_j$ is a finite set. 
If $P_1,P_2$ are two different points in $\gamma_i\cap \gamma_j$, since $\gamma_i$ is a simple curve, we have two arcs  $\alpha_1,\alpha_2$ contained in $\gamma_i$ that connect $P_1$ to $P_2$. Similarly, there are two arcs $\beta_1,\beta_2$ contained in $\gamma_j$ that connect $P_1$ to $P_2$. Without lack of generality, suppose $\beta_1\cap \gamma_i=\{P_1,P_2\}$. Then $\alpha_1\cup \beta_1$ and $\alpha_2\cup \beta_1$ are Jordan curves. Consider the simple regions bounded by these Jordan curves. It is clear that one must contain the other, and the larger one also contains a boundary point of $\gamma_i$ in its interior. This contradicts the assumption that each boundary point in $\gamma_i$ is reachable.  Therefore, the intersection of $\gamma_i$ and $\gamma_j$ is either empty or a singleton.

Recall that $Q$ is taken from the bounded connected component $c$ of $A^\circ\cup A^e$. We know $Q$ is contained in the interior of some simple region $d_i$, which has boundary $\gamma_i$. Suppose this is not the case. Then $Q$ will be connected to another point outside $A$ by an arc which is outside all $d_i$. Therefore, another boundary point of $A$ that is not on any $\gamma_i$ should exist. This is a contradiction. 
Without lack of generality, suppose this simple region is $d_0$. We then show $\partial d_i\subseteq d_0$ for any other $d_i$. Suppose this is not the case. Then there exists $P\in\partial d_i$ which is not in $d_0$. Because $P$ is reachable from $Q$, which is in the interior of $d_0$, this leads to a contradiction. So we have $\partial d_i\subseteq d_0$, hence $d_i\subseteq d_0$ for any $1\leq i\leq s$.  It is clear that $Q$ is not in any $d_i$ for $1\leq i\leq s$. Because $c$ is a connected component of   $A^\circ\cup A^e$ and $c$ contains $Q$, we know $c$ is contained in $A^\circ\setminus \bigcup_{i=1}^s d_i$. On the other hand, suppose $P\in A^\circ\setminus \bigcup_{i=1}^s d_i$. Let $P'$ be a point in $\bigcup_{i=0}^s{d_i}$ such that $d(P,P')=d(P,\bigcup_{i=0}^s{d_i})$. Then $P'\in\bigcup_{i=1}^s \gamma_i$. Suppose $\alpha_P$ is an open arc contained in $c$ which connects $Q$ to  $P$. It is clear that $\alpha_P\cup \overline{PP'}$ connects $Q$ to $P'$. Although this is not an open arc in $c$, it could be easily modified into an open arc in $c$ that connects $Q$ to $P'$. This means that any point $P$ in $A^\circ\setminus \bigcup_{i=1}^s d_i$ is also in $c$. Therefore, $c=A^\circ\setminus \bigcup_{i=1}^s d_i$. As a consequence, we know $\overline{c}=(d_0;d_1,\cdots,d_s)$ is a simple region with holes.
\end{proof}

\bibliographystyle{plain}

\bibliography{compob}
\end{document}